\titlespacing{\paragraph}{0pt}{0.5em}{0.5em}[]
\author{Anima Anandkumar\thanks{University of California, Irvine. Email: a.anandkumar@uci.edu.} \and Rong Ge\footnote{Duke University. Email: rongge@cs.duke.edu}}
\newcommand\inner[1]{\ensuremath{\langle #1 \rangle}}
\definecolor{blue1}{HTML}{0066FF}
\definecolor{lpurple}{cmyk}{.05,0.18,0,0}
\def\tha{{\mbox{\tiny th}}}
\DeclareMathOperator{\Var}{Var}
 \def\0{{\bf 0}}
\def\qed{\hfill\hbox{${\vcenter{\vbox{
    \hrule height 0.4pt\hbox{\vrule width 0.4pt height 6pt
    \kern5pt\vrule width 0.4pt}\hrule height 0.4pt}}}$}}
\definecolor{myred}{rgb}{0.3,0.0,0.7}
\definecolor{dkg}{rgb}{0.1,0.7,0.2}
\definecolor{dkb}{rgb}{0.0,0.2,0.8}
\def\Lc{{\cal L}}
\def\Pc{{\cal P}}
\def\Sc{{\cal S}}
\def\Ebb{{\mathbb E}}
\def\Rbb{{\mathbb R}}
\newcommand{\bprf}{\begin{myproof}}
\newcommand{\eprf}{\end{myproof}}
\newcommand{\bp}{\begin{psfrags}}
\newcommand{\ep}{\end{psfrags}}
\newcommand{\bl}{\begin{lemma}}
\newcommand{\el}{\end{lemma}}
\newcommand{\bt}{\begin{theorem}}
\newcommand{\et}{\end{theorem}}
\newcommand{\bc}{\begin{center}}
\newcommand{\ec}{\end{center}}
\newcommand{\bi}{\begin{itemize}}
\newcommand{\ei}{\end{itemize}}
\newcommand{\ben}{\begin{enumerate}}
\newcommand{\een}{\end{enumerate}}
\newcommand{\bd}{\begin{definition}}
\newcommand{\ed}{\end{definition}}
\def\beq{\begin{equation}}
\def\eeq{\end{equation}\noindent}
\def\beqn{\begin{eqnarray}}
\def\eeqn{\end{eqnarray} \noindent}
\def\beqnn{  \begin{eqnarray*}}
\def\eeqnn{\end{eqnarray*}  \noindent}
\def\bcase{  \begin{numcases}}
\def\ecase{\end{numcases}   \noindent}
\def\bsbcase{  \begin{subnumcases}}
\def\esbcase{\end{subnumcases}   \noindent}
\newtheorem{theorem}{Theorem}
\newtheorem{corollary}{Corollary}
\newtheorem{lemma}{Lemma}
\newtheorem{assumption}{Assumption}
\newtheorem{claim}{Claim}
\theoremstyle{definition}
\newtheorem{definition}{Definition}
\theoremstyle{remark}%
\newtheorem{remark}{Remark}
\newenvironment{myproof}{\noindent{\em Proof:} \hspace*{1em}}{
    \hspace*{\fill} $\Box$ }
\newenvironment{proof_of}[1]{\noindent {\em Proof of #1: }}{\hspace*{\fill} $\Box$ }
\newcommand{\matplottc}[1]{               
        \unitlength .45truein
        \begin{center}
        \includegraphics{#1.ps}
        \end{picture}
        \end{center}
}
\def\psfancypar#1#2{\begingroup\def\par{\endgraf\endgroup\lineskiplimit=0pt}
               \setbox2=\hbox{\large\sc #2}
               \newdimen\tmpht \tmpht \ht2 \advance\tmpht by \baselineskip
               \font\hhuge=Times-Bold at \tmpht
               \setbox1=\hbox{{\hhuge #1}}
               \count7=\tmpht \count8=\ht1
               \divide\count8 by 1000 \divide\count7 by \count8
               \tmpht=.001\tmpht\multiply\tmpht by \count7
               \font\hhuge=Times-Bold at \tmpht
               \setbox1=\hbox{{\hhuge #1}}
               \noindent
                \hangindent1.05\wd1
               \hangafter=-2 {\hskip-\hangindent
               \lower1\ht1\hbox{\raise1.0\ht2\copy1}%
                \kern-0\wd1}\copy2\lineskiplimit=-1000pt}
\def\Kout{\setbox1=\hbox{\Huge\bf K}\hbox to
1.05\wd1{\hspace{.05\wd1}
\def\Sout{\setbox1=\hbox{\Huge\bf S}\hbox to 1.05\wd1{\hspace{.05\wd1}



\begin{document}

\title{Efficient approaches for escaping higher order saddle points  in non-convex optimization}

\maketitle

\begin{abstract}
Local search heuristics for non-convex optimizations are popular   in applied machine learning. However, in general it is  hard to  guarantee that such algorithms even  converge to a {\em local minimum}, due to the existence of complicated saddle point structures in high dimensions. Many functions have {\em degenerate} saddle points such that the first and second order derivatives cannot distinguish them with local optima.  In this paper we use higher order derivatives to escape these saddle points: we design the first efficient algorithm  guaranteed to converge to a third order local optimum (while existing techniques are at most second order). We also show that it is NP-hard to extend this further to finding fourth order local optima.
\end{abstract}


\section{Introduction}

Recent trend in applied machine learning   has been dominated by the use of   large-scale non-convex optimization, e.g. deep learning. However, analyzing non-convex optimization in high dimensions   is very challenging.  Current theoretical results are  mostly negative regarding the hardness of reaching the globally optimal solution.

Less attention is paid to the issue of reaching a locally optimal solution. In fact, even this is computationally hard in the worst case~\citep{nie2015hierarchy}. The hardness arises due to  diversity and  ubiquity  of  critical points in high dimensions.  In addition to local optima, the set of critical points also consists of  saddle points, which    possess directions along which   the objective value improves. Since the objective function can be arbitrarily bad   at these points, it is important to develop strategies to escape them, in order to reach a local optimum.

The problem of saddle points is compounded in high dimensions. Due to curse of dimensionality, the number of saddle points   grows exponentially for many problems of interest, e.g.~\citep{auer1996exponentially,cartwright2013number,auffinger2013complexity}. Ordinary gradient descent can be stuck in a saddle point for an arbitrarily long time before making progress.  A few recent works have addressed this issue, either by incorporating second order Hessian information~\citep{nesterov2006cubic} or through  noisy stochastic gradient descent~\citep{ge2015escaping}. These works however require  the Hessian matrix at   the saddle point  to have a strictly negative eigenvalue, termed as  the {\em strict saddle} condition.  The  time to escape the saddle point   depends  (polynomially) on this negative eigenvalue. Some structured problems such as complete dictionary learning, phase retrieval and orthogonal tensor decomposition possess this property~\citep{sun2015nonconvex}.

On the other hand, for problems without the strict saddle property, the above techniques  can converge to a saddle point, which is disguised as   a local minimum when only first and second order information is used. We address this problem in this work, and extend the notion of second order optimality to higher order optimality conditions. We propose a new efficient algorithm that is guaranteed to converge to a third order local minimum, and show that it is NP-hard to find a fourth order local minimum.

Our results are relevant for a wide range of non-convex problems which   possess degenerate critical points. At these points, the   Hessian matrix is singular. Such points   arise  due to  symmetries in the optimization problem, e.g., permutation symmetry in a multi-layer neural network. Singularities also arise in over-specified models, where   the model capacity (such as the number of neurons in neural networks) exceeds the complexity of the target function. Here,  certain neurons can be eliminated (i.e. have weights set to zero), and such critical points  possess the so-called {\em elimination singularity}~\citep{wei2008dynamics}. Alternatively, two neurons can have the same weight, and this is known as {\em overlap singularity}~\citep{wei2008dynamics}. The Hessian matrix is singular at such   critical points.  This behavior is limited  not just to neural networks, but has  also been studied in overspecified   Gaussian mixtures,  radial basis function networks, ARMA models of time series~\citep{amari2006singularities,wei2008dynamics}, and student-teacher networks, also known as {\em soft committee models}~\citep{saad1995line,inoue2003line}.

\begin{figure}
\centering
\includegraphics[width=\textwidth]{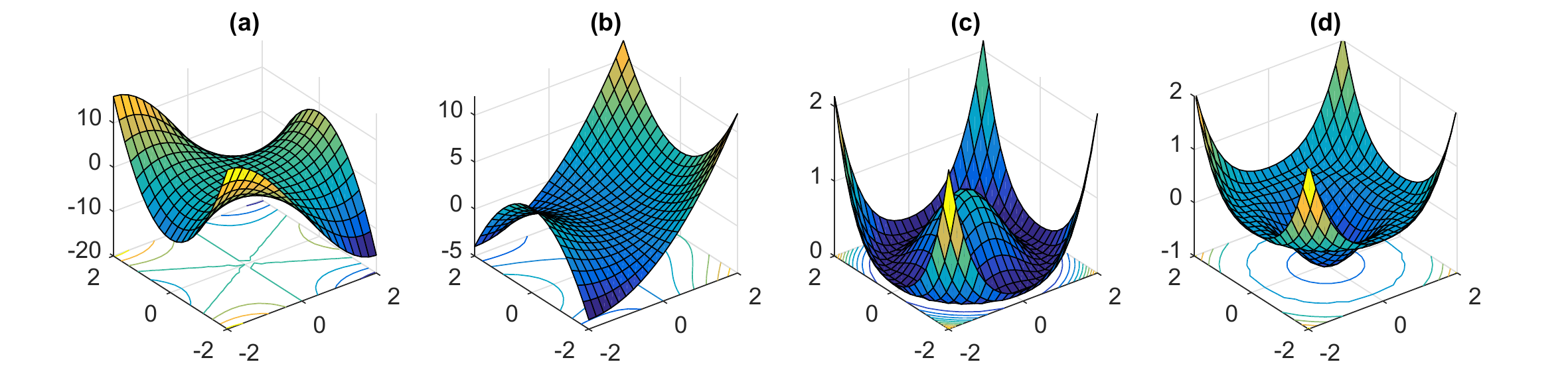}
\caption{Examples of Degenerate Saddle Points: (a) Monkey Saddle $-3x^2y+y^3$, $(0,0)$ is a second order local minimum but not third order local minimum; (b) $x^2y+y^2$, $(0,0)$ is a third order local minimum but not fourth order local minimum; (c) ``wine bottle'', the bottom of the bottle is a connected set with degenerate Hessian; (d) ``inverted wine bottle'': the points on the circle with degenerate Hessian are actually saddle points and not local minima.}\label{fig:functions}
\end{figure}

The current trend in practice is to incorporate overspecified models~\citep{giles2001overfitting}.
Theoretically, bad local optima are guaranteed to disappear in neural networks under massive levels of overspecification~\citep{safran2015quality}. On the other hand, as discussed above, the saddle point problem is compounded in these overspecified models. Empirically, the presence of singular saddle points is found to  slow down learning substantially~\citep{saad1995line,inoue2003line,amari2006singularities,wei2008dynamics}. Intuitively, these singular saddle points are    surrounded by    {\em plateaus} or flat regions  with a   sub-optimal objective value.
For these regions neither the gradient or Hessian information can lead to a direction that improves the function value. Therefore they can ``fool'' the (ordinary) first and second order algorithms and they may stuck there for long periods of time.
Higher order derivatives are needed to classify the point as either a local optimum or a saddle point. In this work, we tackle this challenging problem of escaping such higher order saddle points.


\subsection{Summary of Results}

We call a point $x$ a $p^{\tha}$ order local minimum if for any nearby point $y$ $f(x) - f(y) \le o(\|x-y\|^p)$ (see Definition~\ref{def:pthorder}).

We give a necessary and sufficient condition for a point $x$ to be a third order local minimum (see Section~\ref{sec:condition}). Similar conditions (for even higher order) have been discussed in previous works, however their algorithmic implications were not known. We design an algorithm that is guaranteed to find a third order local minimum.

\begin{theorem} (Informal) There is an algorithm that always converges to a third order local minimum (see Theorem~\ref{thm:thirdlimit}). Also, in polynomial time the algorithm can find a point that is ``similar'' to a third order local minimum (see Theorem~\ref{thm:thirdfinite}).
\end{theorem}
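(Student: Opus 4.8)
The plan is to first pin down the algebraic characterization of third order optimality and then convert each way it can fail into a guaranteed descent step. Writing $g = \nabla f(x)$, $H = \nabla^2 f(x)$, and $T = \nabla^3 f(x)$ (a symmetric $3$-tensor), a Taylor expansion along a unit direction $u$ gives $f(x+tu) = f(x) + t\, g^\t u + \tfrac{t^2}{2} u^\t H u + \tfrac{t^3}{6} T[u,u,u] + O(t^4)$. From this one reads off that $x$ is a third order local minimum if and only if (i) $g = 0$, (ii) $H \succeq 0$, and (iii) $T[u,u,u] = 0$ for every $u$ in the null space $\Nc = \ker H$: along a null direction the quadratic term vanishes, so a nonzero odd cubic term could be driven negative by flipping the sign of $t$. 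The crucial point — and the reason the third order problem is tractable while the fourth order one is not — is that condition (iii) asks whether the symmetric tensor $T$ restricted to $\Nc$ is the \emph{zero} tensor, a linear and hence easily checkable condition, whereas the analogous fourth order condition asks for positive semidefiniteness of a quartic form on a subspace, which is NP-hard.

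The algorithm would iterate the following. At the current iterate, test the three conditions in order. If $\norm{g}$ exceeds a threshold $\epsilon_1$, take a gradient step. Otherwise, if $H$ has an eigenvalue below $-\epsilon_2$, move along the corresponding eigenvector (a step along it decreases $f$ through negative curvature). Otherwise $g$ is small and $H \succeq -\epsilon_2 I$, and we examine the approximate null space $\Nc = \{u : u^\t H u \le \gamma\}$ for a suitable threshold $\gamma$: restrict $T$ to $\Nc$ and search for a direction $u \in \Nc$ with $\abs{T[u,u,u]}$ large. If no such direction exists, the point is declared an approximate third order local minimum; otherwise we move along $-\mathrm{sign}(T[u,u,u])\,u$.

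The key descent estimate is for this third order step. Writing $\tau = \abs{T[u,u,u]}$ for a unit $u \in \Nc$ with the sign chosen to make the cubic term negative, and letting $L_4$ bound the fourth derivative, $f(x+tu) - f(x) \le \tfrac{t^2}{2}\gamma - \tfrac{t^3}{6}\tau + \tfrac{L_4}{24} t^4$. Taking $t \asymp \tau/L_4$ and $\gamma \asymp \tau^2/L_4$ makes the cubic term dominate both the residual quadratic term and the quartic remainder, yielding a decrease of order $\tau^4/L_4^3$. To actually produce a usable direction I would \emph{not} attempt to maximize the cubic form, since that sub-problem is itself hard; instead I would use that a nonzero symmetric $3$-tensor always has a nonzero diagonal value (by polarization), so that either a random direction in $\Nc$ or a random contraction — draw a Gaussian $w \in \Nc$, form the matrix $M = T[w,\cdot,\cdot]$ and take its top eigenvector — yields, with constant probability, a $u$ whose cubic value is within a $1/\mathrm{poly}(d)$ factor of the maximum (here $d$ is the dimension), which is all the descent bound requires.

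Convergence then follows by the standard monotone-descent argument. Since every non-terminating iteration decreases $f$ by at least $\mathrm{poly}(\epsilon_1,\epsilon_2,\epsilon_3,1/d,1/L_4)$ and $f$ is bounded below over the range traversed, the finite-time statement (Theorem~\ref{thm:thirdfinite}) follows after $\mathrm{poly}(1/\epsilon)$ iterations, each costing an eigendecomposition and a tensor contraction. Letting the tolerances tend to zero and passing to limit points gives the asymptotic statement (Theorem~\ref{thm:thirdlimit}): any limit point must satisfy (i)--(iii) exactly, for otherwise a descent direction of uniformly bounded quality would persist in a neighborhood and contradict the convergence of the monotone sequence $f(x_k)$. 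I expect the main obstacle to be precisely the third order step — simultaneously (a) extracting a provably good direction from the restricted tensor without solving the NP-hard cubic-maximization problem, and (b) calibrating the null-space threshold $\gamma$ and step length $t$ so that the cubic gain survives both the leftover positive curvature and the higher order Taylor remainder. Getting these trade-offs to interlock cleanly is what determines the final polynomial dependence on $\epsilon$.
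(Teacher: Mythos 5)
Your proposal follows essentially the same route as the paper: the same characterization of third order optimality, the same calibration $\gamma \asymp \tau^2/L$ of the near-null subspace threshold against the cubic gain (yielding a decrease of order $\tau^4/L^3$ per third-order step), the same randomized direction-finding via Gaussian sampling and anti-concentration with a $1/\mathrm{poly}(n)$ approximation factor, and the same monotone-descent argument for both the finite-time and limit statements; the paper merely packages the first and second order steps as Nesterov's cubic regularization rather than separate gradient and negative-curvature steps. The one detail you would need to repair is that your set $\Nc=\{u:u^\top Hu\le\gamma\}$ is not a linear subspace, so ``restricting $T$ to $\Nc$'' and ``drawing a Gaussian in $\Nc$'' are not well-defined as stated; the paper instead uses the span of eigenvectors with eigenvalue at most $\gamma$ and resolves the circular dependence between $\gamma$ and the tensor norm by enumerating over the $n$ candidate eigensubspaces (its ``competitive subspace'').
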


By ``similar'' we mean the point $x$ approximately satisfies the necessary and sufficient condition for third order local minimum (see Definition~\ref{cond:third}): the gradient $\nabla f(x)$ is small, Hessian $\nabla^2 f(x)$ is almost positive semidefinite (p.s.d) and in every subspace where the Hessian is small, the norm of the third order derivatives is also small.

To the best of our knowledge this is the first algorithm that is guaranteed to converge to a third order local minimum. The algorithm alternates between a second order step (which we use cubic regularization\citep{nesterov2006cubic}) and a third order step. The third order step first identifies a ``competitive subspace'' where the third order derivative  has a much larger norm than the second order. It  then tries to find a good direction in this subspace to make improvement. For more details see Section~\ref{sec:alg}.

We also show that it is NP-hard to find a fourth order local minimum:

\begin{theorem} (Informal) Even for a well-behaved function, it is NP-hard to find a fourth order local minimum (see Theorem~\ref{thm:hard}).
\end{theorem}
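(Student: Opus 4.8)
The plan is to reduce from the problem of testing nonnegativity of a quartic form, equivalently the copositivity of a symmetric matrix, which is NP-hard (Murty--Kabadi): given a symmetric integer matrix $M$, deciding whether $\sum_{i,j} M_{ij} z_i^2 z_j^2 \ge 0$ for all $z \in \mathbb{R}^n$ is co-NP-complete, so deciding that this form attains a negative value is NP-complete. The bridge to our setting is the observation that the fourth order optimality condition at a sufficiently degenerate critical point is exactly a quartic-nonnegativity condition. Concretely, if at a point $x$ we have $\nabla f(x)=0$, $\nabla^2 f(x)=0$, and the third order derivative tensor $D^3 f(x)=0$, then the Taylor expansion gives $f(x+v)-f(x) = \tfrac{1}{24} D^4 f(x)[v,v,v,v] + o(\|v\|^4)$, so by Definition~\ref{def:pthorder} the point $x$ is a fourth order local minimum if and only if the quartic form $v \mapsto D^4 f(x)[v,v,v,v]$ is nonnegative on all of $\mathbb{R}^n$ (nonnegativity near $0$ is equivalent to global nonnegativity by homogeneity).

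First I would build the gadget. Given $M$, set $q(z) = \sum_{i,j} M_{ij} z_i^2 z_j^2$ and define the polynomial $f(z) = q(z) + \|z\|_2^6$ on $\mathbb{R}^n$. The degree-six term makes $f$ coercive, so $f$ is ``well-behaved'' in the sense that its global minimum---and hence a fourth order local minimum---always exists. At the origin all derivatives through order three vanish, since $q$ and $\|z\|^6$ have no monomials of degree $\le 3$, so the origin is automatically a third order local minimum in the sense of Definition~\ref{cond:third}: the gradient is zero, the Hessian is (trivially) positive semidefinite, and the third order derivative vanishes on the entire kernel of the Hessian. The degree-six term does not affect the fourth order Taylor coefficient, so the fourth order form at the origin is exactly $q$. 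Hence the origin is a fourth order local minimum if and only if $M$ is copositive. This already shows that deciding whether a given third order local minimum is a fourth order local minimum is co-NP-hard, which is the precise sense in which one cannot push the guarantee of Theorem~\ref{thm:thirdlimit} one order higher.

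To convert this into hardness of \emph{finding} a fourth order local minimum, I would argue that an efficient finder, together with the easy test of whether its output lies near the origin, would decide copositivity. The clean case is $M$ copositive: then $f(z) = q(z)+\|z\|^6 \ge \|z\|^6$, so the origin is the unique global minimizer and any returned fourth order local minimum can be compared against it. The delicate case is non-copositivity, where I would augment the construction with a single controlled ``decoy'' well---for instance by appending an extra coordinate $w$ and a term that plants a guaranteed, easily-locatable fourth order local minimum away from the origin whenever $q$ dips below zero---so that the location of the finder's output distinguishes the two cases. Carrying out this forcing while keeping the total function a bounded-degree, coercive, integer-coefficient polynomial, and ensuring that no \emph{spurious} fourth order local minima are created that could let the finder ``escape'' the reduction, is the main obstacle: the quartic $q$ can be flat along cones, so the global landscape of $f$ must be controlled carefully (e.g.\ by a generic lower-order perturbation that breaks such flat directions without changing the answer) to guarantee that the only fourth order local minima are the intended ones.

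Finally I would assemble the reduction: the map $M \mapsto f$ is polynomial-time and produces a well-behaved polynomial, and a polynomial-time algorithm returning a fourth order local minimum would decide copositivity (equivalently, the complementary NP-complete problem), contradicting $\mathrm{P} \ne \mathrm{NP}$. The same gadget simultaneously certifies the separation between third and fourth order local minimality that motivates the theorem, since the origin is always a third order local minimum while its fourth order status encodes the hard instance.
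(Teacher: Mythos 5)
Your gadget is exactly the paper's ($g(z)=q(z)+\|z\|^6$ with $q$ a hard quartic form, and the same hardness source for quartic nonnegativity/copositivity), and your analysis of the origin is correct. But the proposal stops short of a proof precisely where the theorem needs one: you declare the non-copositive case ``delicate,'' propose an extra-coordinate decoy well, and flag controlling spurious fourth order local minima as ``the main obstacle'' to be overcome by a further perturbation. That step is never carried out, so the reduction from \emph{finding} (as opposed to verifying at a given point) is not established.

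The missing idea, which makes the decoy unnecessary, is to restrict $g$ to rays. For any $x\ne 0$, $g(tx)=f(x)t^4+\|x\|^6t^6$, so $\langle\nabla g(x),x\rangle=4f(x)+6\|x\|^6$; if $x$ is a fourth order (hence first order) local minimum this must vanish, forcing $f(x)=-\tfrac{3}{2}\|x\|^6<0$. Conversely, if $f(z)<0$ for some unit $z$, then $g(tz)-g(0)=f(z)t^4+t^6=-\Omega(t^4)$, so the origin is \emph{not} a fourth order local minimum under Definition~\ref{def:pthorder}. Hence in the non-copositive case \emph{every} fourth order local minimum of $g$ (at least one exists, since $g$ is coercive and its global minimizer qualifies) satisfies $f(x)<0$, while in the copositive case the same ray computation shows the origin is the \emph{unique} critical point and therefore the unique fourth order local minimum. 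A finder's output can thus be post-processed by simply evaluating $f$: a negative value certifies non-nonnegativity, a nonnegative value certifies nonnegativity. No planted well, no genericity perturbation, and no control of flat cones is required; the landscape you worried about is already fully pinned down by the one-dimensional restrictions.
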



\subsection{Related Work}

A popular approach to overcoming saddle points is to incorporate second order information.  However, the popular second order approach of  Newton's method is not suitable since it  converges to an arbitrary critical point, and does not distinguish between a local minimum and a saddle point. Directions along negative values of the Hessian matrix help in escaping the saddle point.    A simple solution is then to use these directions, whenever gradient descent improvements are small (which signals the approach towards a critical point)~\citep{frieze1996learning,vempala2011structure}.

A more elegant framework is the so-called trust region method~\citep{dauphin2014identifying,sun2015nonconvex} which involves optimizing the second order Taylor's approximation of the objective function in a local neighborhood of the current point. Intuitively, this  objective ``switches'' smoothly between first order and second order updates.~\citet{nesterov2006cubic}   propose  adding a  cubic regularization term to this Taylor's approximation. In a beautiful result, they show  that in each step, this  cubic regularized objective can be solved optimally due to hidden convexity and overall, the algorithm converges to a local optimum in bounded time. We give an overview of this algorithm in Section~\ref{sec:overview}. \cite{baes2009estimate} generalizes this idea to use higher order Taylor expansion, however the optimization problem is intractable even for third order Taylor expansion with quartic regularizer. \citet{ge2015escaping}  recently showed that     it is possible to escape saddle points using only first order information based on noisy stochastic gradient descent (SGD) in polynomial time in high dimensions.  \citet{lee2016gradient} showed that even without adding noise, in the limit gradient descent converges to (second order) local minimum with random initialization. In many applications, these first-order algorithms are far cheaper than the  computation of the Hessian eigenvectors.~\citet{nie2015hierarchy} propose using the hierarchy of semi-definite relaxations to compute  all the local optima which satisfy first and second order necessary conditions based on semi-definite relaxations.

All the above works deal with local optimality based on second order conditions. When the Hessian matrix is singular and p.s.d., higher order derivatives are required to determine whether it is a local optimum or a saddle point.  Higher order optimality conditions, both necessary and sufficient, have been characterized before, e.g.~\citep{bernstein1984systematic,warga1986higher}. But these conditions are not efficiently computable, and it is NP-hard to determine local optimality, given such  information about higher order  derivatives~\citep{murty1987some}.


\section{Preliminaries}

In this section we first introduce the classifications of saddle points. Next, as we often work with third order derivatives, and we treat it as a order 3 tensor, we introduce the necessary notations for tensors. 

\subsection{Critical Points}

Throughout the paper we consider functions $f:\Rbb^n \to \Rbb$ whose first three order derivatives exist. We represent the derivatives by $\nabla f(x) \in \Rbb^n$, $\nabla^2 f(x) \in \Rbb^{n\times n}$ and $\nabla^3 f(x)\in \Rbb^{n^3}$, where
$$
[\nabla f(x)]_i  = \frac{\partial}{\partial x_i} f(x),
[\nabla^2 f(x)]_{i,j}  = \frac{\partial^2}{\partial x_i\partial x_j} f(x),
[\nabla^3 f(x)]_{i,j,k}  = \frac{\partial^3}{\partial x_i\partial x_j\partial x_k} f(x).
$$

For such smooth function $f(x)$, we say $x$ is a {\em critical point} if $\nabla f(x) = \vec{0}$. Traditionally, critical points are classified into four cases according to the Hessian matrix:
\begin{enumerate}
\setlength{\itemsep}{0pt}
\item (Local Minimum) All eigenvalues of $\nabla^2 f(x)$ are positive.
\item (Local Maximum) All eigenvalues of $\nabla^2 f(x)$ are negative.
\item (Strict saddle) $\nabla^2 f(x)$ has at least one positive and one negative eigenvalues.
\item (Degenerate) $\nabla^2 f(x)$ has either nonnegative or nonpositive eigenvalues, with some eigenvalues equal to 0.
\end{enumerate}

As we shall see later in Section~\ref{sec:overview}, for the first three cases second order algorithms can either find a direction to reduce the function value (in case of local maximum or strict saddle), or correct asserting that the current point is a local minimum. However, second order algorithms cannot handle degenerate saddle points.

Degeneracy of Hessian   indicates the presence of a {\em gutter} structure, where a set of connected points all have the same value, and all are local minima, maxima or saddle points~\citep{dauphin2014identifying}. See for example Figure~\ref{fig:functions} (c) (d).

If the Hessian at a critical point $x$ is p.s.d., even if it has 0 eigenvalues we can say the point is a second order local minimum: for any $y$ that is sufficiently close to $x$, we have $f(x) - f(y) = o(\|x-y\|^2)$. That is, although there might be a vector $y$ that makes the function value decrease, the amount of decrease is a lower order term compared to $\|x-y\|^2$. In this paper we consider higher order local minimum:

\begin{definition}[$p$-th order local minimum]\label{def:pthorder}A critical point $x$ is a $p$-th order local minimum, if there exists constants $C,\epsilon> 0$ such that for every $y$ with $\|y-x\|\le \epsilon$,
$$
f(y) \ge f(x) - C \|x-y\|^{p+1}.
$$
\end{definition}

Every critical point is a first order local minimum, and every point that satisfies the second order necessary condition ($\nabla f(x) = 0, \nabla^2 f(x)\succeq 0$) is a second order local minimum.

\subsection{Matrix and Tensor Notations}

For a vector $v \in \Rbb^n$, we use $\|v\|$ to denote its $\ell_2$ norm. 
For a matrix $M \in \Rbb^{n\times n}$, we use $\|M\|$ to denote its spectral (operator) norm. All the matrices we consider are symmetric matrices, and they can be decomposed using eigen-decomposition:
$$
M = \sum_{i=1}^n \lambda_i v_i v_i^\top.
$$
In this decomposition $v_i$'s are orthonormal vectors, and $\lambda_i$'s are the eigenvalues of $M$. We always assume $\lambda_1\ge \lambda_2 \ge \ldots \ge \lambda_n$. We use $\lambda_1(M)$ to denote its largest eigenvalue and $\lambda_n(M)$ to denote its smallest eigenvalue. By the property of symmetric matrices we also know $\|M\| = \max\{|\lambda_1(M)|, |\lambda_n(M)|\}$. We use $\|M\|_F$ to denote the Frobenius norm of the matrix $\|M\|_F = \sqrt{\sum_{i,j\in [n]} M_{i,j}^2}$.

The third order derivative is represented by a $n\times n\times n$ tensor $T$. We use the following multilinear notation to simplify the notations of tensors:
\begin{definition}[Multilinear notations] Let $T\in \Rbb^{n\times n\times n}$ be a third order tensor. Let $U\in \Rbb^{n\times n_1}$, $V\in \Rbb{n\times n_2}$ and $W\in \Rbb^{n\times n_3}$ be three matrices, then the multilinear form $T(U,V,W)$ is a tensor in $\Rbb^{n_1\otimes n_2\otimes n_3}$ that is equal to
$$
[T(U,V,W)]_{p,q,r} = \sum_{i,j,k\in [n]} T_{i,j,k} U_{i,p}V_{j,q}W_{k,r}.
$$
\end{definition}

In particular, for vectors $u,v,w\in \Rbb^n$, $T(u,v,w)$ is a number that relates linearly in $u,v$ and $w$ (similar to $u^\top M v$ for a matrix); $T(u,v,I)$ is a vector in $\Rbb^n$ (similar to $Mu$ for a matrix); $T(u,I,I)$ is a matrix in $\Rbb^{n\times n}$. 

The Frobenius norm of a tensor $T$ is defined similarly as matrices: $\|T\|_F = \sqrt{\sum_{i,j,k\in [n]} T_{i,j,k}^2}$. The spectral norm (also called injective norm) of a tensor is defined as $$\|T\| = \max_{\|u\| = 1, \|v\| =1, \|w\| = 1} T(u,v,w).$$ 

We say a tensor is symmetric if $T_{i,j,k} = T_{\pi(i,j,k)}$ for any permutation of the indices. For symmetric tensors the spectral norm is also equal to $\|T\| = \max_{\|u\| = 1} T(u,u,u)$. In both cases it is NP-hard to compute the spectral norm of a tensor\citep{hillar2013most}.

We will often need to project a tensor $T$ to a subspace $\Pc$. Let $P$ be the projection matrix to the subspace $P$, we use the notation $\mbox{Proj}_\Pc T$ which denotes $T(P,P,P)$. Intuitively, $[T(P,P,P)]_{u,v,w} = T(Pu, Pv, Pw)$, that is, the projected tensor applied to vector $u,v,w$ is equivalent to the original tensor applied to the projection of $u,v,w$.

\section{Overview of Nestorov's Cubic Regularization}\label{sec:overview}

In this section we review the guarantees of Nesterov's Cubic Regularization algorithm\citep{nesterov2006cubic}. We will use this algorithm as a key step later in Section~\ref{sec:alg}, and prove analogous results for third order local minimum.

The algorithm requires the first two order derivatives exist and the following smoothness constraint:

\begin{assumption}[Lipschitz-Hessian]\label{assump:lipschitzhessian}
$$
\forall x,y, \|\nabla^2 f(x) - \nabla^2 f(y)\| \le R\|x-y\|.
$$
\end{assumption}

At a point $x$, the algorithm tries to find a nearby point $z$ that optimizes the degree two Taylor's expansion: $f(x) + \inner{\nabla f(x), z-x} + \frac{1}{2}(z-x)^\top(\nabla^2 f(x))(z-x)$, with the cubic distance $\frac{R}{6}\|z-x\|^3$ as a regularizer. See Algorithm~\ref{alg:cubic} for one iteration of the algorithm. The final algorithm generates a sequence of points $x^{(0)},x^{(1)},x^{(2)}, \ldots$ where $x^{(i+1)} = \mbox{CubicReg}(x^{(i)})$.

\begin{algorithm}
\begin{algorithmic}
\REQUIRE function $f$, current point $x$, Hessian smoothness $R$
\ENSURE Next point $z$ that satisfies Theorem~\ref{thm:cubicreg}.
\STATE Let $z = \arg\min f(x) + \inner{\nabla f(x), z-x} + \frac{1}{2}(z-x)^\top(\nabla^2 f(x))(z-x) + \frac{R}{6}\|z-x\|^3$.
\RETURN $z$
\end{algorithmic}
\caption{CubicReg\citep{nesterov2006cubic}}\label{alg:cubic}
\end{algorithm}

The optimization problem that Algorithm~\ref{alg:cubic} tries to solve may seem difficult, as it has a cubic regularizer $\|z-x\|^3$. However, \cite{nesterov2006cubic} showed that it is possible to solve this optimization problem in polynomial time. 

For each point, define $\mu(z)$ to measure how close the point $z$ is to satisfying the second order optimality condition:

\begin{definition} \label{def:mu}
$\mu(z) = \max\left\{\sqrt{\frac{1}{R}\|\nabla f(z)\|}, -\frac{2}{3R}\lambda_n \nabla^2 f(z)\right\}$
\end{definition}

When $\mu(z) = 0$ we know $\nabla f(z) = 0$ and $\nabla^2 f(z) \succeq 0$, which satisfies the second order necessary conditions (and in fact implies that $z$ is a second order local minimum). When $\mu(z)$ is small we can say that the point $z$ approximately satisfies the second order optimality condition.

For one step of the algorithm the following guarantees can be proven\footnote{All of guarantees we stated here correspond to setting the regularizer $R$ to be exactly equal to the smoothness in Assumption~\ref{assump:lipschitzhessian}.}

\begin{theorem}\label{thm:cubicreg} \citep{nesterov2006cubic}
Suppose $z = \mbox{CubicRegularize}(x)$, then $\|z-x\| \ge \mu(z)$ and $f(z) \le f(x) - R\|z-x\|^3/12$.
\end{theorem}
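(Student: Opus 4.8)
The plan is to reduce everything to two optimality conditions satisfied by the global minimizer $z$ of the cubic model, and then read off both conclusions. Write $g=\nabla f(x)$, $H=\nabla^2 f(x)$, $r=\|z-x\|$, and let $m(y)=f(x)+\inner{g,y-x}+\frac12(y-x)^\top H(y-x)+\frac{R}{6}\|y-x\|^3$ be the function minimized in Algorithm~\ref{alg:cubic}. Since $z$ is a global minimizer of $m$, setting $\nabla m(z)=0$ gives the stationarity identity
\begin{equation}
g+H(z-x)+\tfrac{R}{2}r\,(z-x)=0, \label{eq:stat}
\end{equation}
and, crucially, global optimality yields the positive-semidefiniteness condition
\begin{equation}
H+\tfrac{R}{2}r\,I\succeq 0. \label{eq:psd}
\end{equation}
Everything will follow from \eqref{eq:stat}--\eqref{eq:psd} together with three standard consequences of Assumption~\ref{assump:lipschitzhessian}: the cubic upper bound $f(z)\le m(z)$, the gradient estimate $\|\nabla f(z)-g-H(z-x)\|\le\frac{R}{2}r^2$, and the Hessian estimate $\|\nabla^2 f(z)-H\|\le R r$.

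I expect the main obstacle to be establishing \eqref{eq:psd}. The naive second-order condition $\nabla^2 m(z)\succeq 0$ only gives the weaker $H+\frac{R}{2}rI+\frac{R}{2r}(z-x)(z-x)^\top\succeq 0$, so I would upgrade it using the hidden convexity of the subproblem. Writing $q(y)=\inner{g,y-x}+\frac12(y-x)^\top H(y-x)$, the key claim is that $z$ is a global minimizer of $q$ over the entire ball $\{y:\|y-x\|\le r\}$: if some $y'$ in this ball had $q(y')<q(z)$, then $\frac{R}{6}\|y'-x\|^3\le\frac{R}{6}r^3$ would give $m(y')<m(z)$, contradicting global optimality of $z$ for $m$. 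Thus $z$ solves a trust-region subproblem globally, and by the classical characterization of such subproblems (the ``hidden convexity''/$S$-lemma invoked by \citet{nesterov2006cubic}) there is a multiplier $\lambda\ge0$ with $(H+\lambda I)(z-x)=-g$ and $H+\lambda I\succeq0$; comparing with \eqref{eq:stat} forces $\lambda=\frac{R}{2}r$, which is exactly \eqref{eq:psd}. (The degenerate case $z=x$ is immediate, since then $g=0$ and $H\succeq0$.)

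To get the function decrease, I would take the inner product of \eqref{eq:stat} with $z-x$, which gives $\inner{g,z-x}=-(z-x)^\top H(z-x)-\frac{R}{2}r^3$; substituting into $m(z)$ collapses it to
\[
m(z)-f(x)=-\tfrac12(z-x)^\top H(z-x)-\tfrac{R}{3}r^3.
\]
By \eqref{eq:psd} we have $(z-x)^\top H(z-x)\ge-\frac{R}{2}r\|z-x\|^2=-\frac{R}{2}r^3$, hence $m(z)-f(x)\le\frac{R}{4}r^3-\frac{R}{3}r^3=-\frac{R}{12}r^3$. Combining with $f(z)\le m(z)$ yields the claimed decrease $f(z)\le f(x)-R\|z-x\|^3/12$.

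Finally I would verify $\|z-x\|\ge\mu(z)$ by bounding the two terms in Definition~\ref{def:mu}. For the gradient term, \eqref{eq:stat} says $\|g+H(z-x)\|=\frac{R}{2}r^2$, so the triangle inequality with the Lipschitz-Hessian gradient estimate gives $\|\nabla f(z)\|\le\frac{R}{2}r^2+\frac{R}{2}r^2=Rr^2$, i.e.\ $\sqrt{\frac1R\|\nabla f(z)\|}\le r$. For the Hessian term, the Lipschitz-Hessian bound gives $\nabla^2 f(z)\succeq H-RrI$, and combining with \eqref{eq:psd} (which reads $H\succeq-\frac{R}{2}rI$) yields $\nabla^2 f(z)\succeq-\frac{3R}{2}rI$, so $\lambda_n(\nabla^2 f(z))\ge-\frac{3R}{2}r$ and therefore $-\frac{2}{3R}\lambda_n(\nabla^2 f(z))\le r$. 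Taking the maximum of the two estimates gives $\mu(z)\le r=\|z-x\|$, completing the argument. The only genuinely delicate step is the global (as opposed to merely local) optimality argument for \eqref{eq:psd}; the rest is routine substitution and Lipschitz estimation.
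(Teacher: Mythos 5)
Your argument is correct: the paper itself does not prove Theorem~\ref{thm:cubicreg} (it imports it from \citet{nesterov2006cubic}), and your reconstruction faithfully reproduces that source's proof — the stationarity identity, the global trust-region characterization yielding $H+\tfrac{R}{2}rI\succeq 0$, and the two Lipschitz-Hessian estimates, with all constants checking out. No gaps.
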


Using Theorem~\ref{thm:cubicreg}, \cite{nesterov2006cubic} can get strong convergence results for the sequence $x^{(0)},x^{(1)},x^{(2)}, \ldots$

\begin{theorem}\label{thm:cubicreg2}\citep{nesterov2006cubic}
If $f(x)$ is bounded below by $f(x^*)$, then $\lim_{i\to \infty} \mu(x^{(i)}) = 0$, and for any $t \ge 1$ we have
$$
\min_{1\le i\le t} \mu(x^{(i)}) \le \frac{8}{3}\cdot \left(\frac{3(f(x^{(0)}) - f(x^*))}{2t R}\right)^{1/3}.
$$
\end{theorem}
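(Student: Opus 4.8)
The plan is to combine the per-iteration guarantee of Theorem~\ref{thm:cubicreg} with a telescoping sum over the trajectory $x^{(0)}, x^{(1)}, x^{(2)}, \ldots$. Applying Theorem~\ref{thm:cubicreg} with $x = x^{(i-1)}$ and $z = x^{(i)}$ yields two facts at once: the function value drops by at least $\frac{R}{12}\|x^{(i)} - x^{(i-1)}\|^3$, and the step length dominates the optimality measure through $\|x^{(i)} - x^{(i-1)}\| \ge \mu(x^{(i)})$. Chaining these gives the key per-step inequality
$$
f(x^{(i-1)}) - f(x^{(i)}) \ge \frac{R}{12}\|x^{(i)} - x^{(i-1)}\|^3 \ge \frac{R}{12}\mu(x^{(i)})^3 .
$$

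Next I would sum this for $i = 1, \ldots, t$. The left-hand side telescopes to $f(x^{(0)}) - f(x^{(t)})$, which is at most $f(x^{(0)}) - f(x^*)$ by the lower-bound hypothesis, producing
$$
\sum_{i=1}^{t} \mu(x^{(i)})^3 \le \frac{12\bigl(f(x^{(0)}) - f(x^*)\bigr)}{R} .
$$
The finite-time bound then follows from a minimum-at-most-average argument: with $t$ nonnegative summands, $\min_{1\le i\le t}\mu(x^{(i)})^3 \le \frac{1}{t}\sum_{i=1}^t \mu(x^{(i)})^3$, and taking cube roots gives a bound of the form $C\left(\frac{f(x^{(0)})-f(x^*)}{tR}\right)^{1/3}$. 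For the limit claim, the same summed inequality shows that $\sum_{i=1}^\infty \mu(x^{(i)})^3$ converges, since its partial sums are monotone and uniformly bounded; hence the summand $\mu(x^{(i)})^3 \to 0$, and therefore $\mu(x^{(i)}) \to 0$.

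I expect essentially no conceptual obstacle: once Theorem~\ref{thm:cubicreg} is granted, this is a standard ``sufficient-decrease-plus-telescoping'' convergence argument, of the same flavor used to analyze gradient descent. The only place demanding care is tracking the constant so that it matches the stated factor $\frac{8}{3}(3/2)^{1/3}$; the clean telescoping above in fact yields the sharper constant $12^{1/3}$, so the stated bound holds \emph{a fortiori}, and any slack is absorbed by the looser constant in the theorem statement. All the genuine difficulty is hidden inside Theorem~\ref{thm:cubicreg} itself---establishing the cubic per-step decrease together with the step-length lower bound $\|z-x\|\ge\mu(z)$---which I am treating as given.
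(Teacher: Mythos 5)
Your proposal is correct: the paper itself gives no proof of this statement (it is quoted from \citet{nesterov2006cubic}), and your sufficient-decrease-plus-telescoping argument via Theorem~\ref{thm:cubicreg} is exactly the standard route, mirrored in the paper's own proof of the analogous Theorem~\ref{thm:thirdfinite}. Your constant check is also right, since $(12^{1/3})^3 = 12 < \frac{256}{9} = \left(\frac{8}{3}\right)^3\cdot\frac{3}{2}$, so your bound implies the stated one \emph{a fortiori}.
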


This theorem shows that within first $t$ iterations, we can find a point that ``looks similar'' to a second order local minimum in the sense that gradient is small and Hessian does not have a negative eigenvalue with large absolute value. It is also possible to prove stronger guarantees for the limit points of the sequence:

\begin{theorem}\label{thm:cubicreg3}\citep{nesterov2006cubic}
If the level set $\Lc(x^{(0)}) := \{x|f(x) \le f(x^{(0)})\}$ is bounded, then the following limit exists
$$
\lim_{i\to\infty} f(x^{(i)}) = f^*,
$$
The set $X^*$ of the limit points of this sequence is non-empty. Moreover this is a connected set such that for any $x\in X^*$ we have
$$
f(x) = f^*, \nabla f(x) = \vec{0}, \nabla^2 f(x)\succeq 0.
$$
\end{theorem}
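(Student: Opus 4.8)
The plan is to extract everything from the monotone decrease guaranteed by Theorem~\ref{thm:cubicreg} together with the compactness of the level set $\Lc(x^{(0)})$. First I would note that Theorem~\ref{thm:cubicreg} gives $f(x^{(i+1)}) \le f(x^{(i)}) - R\|x^{(i+1)}-x^{(i)}\|^3/12 \le f(x^{(i)})$, so the whole trajectory stays inside $\Lc(x^{(0)})$ and $f(x^{(i)})$ is non-increasing. Since $f$ is continuous, $\Lc(x^{(0)})$ is closed, and being bounded by hypothesis it is compact; hence $f$ is bounded below on it and the monotone sequence $f(x^{(i)})$ converges to some $f^*$. Non-emptiness of $X^*$ is then immediate from Bolzano--Weierstrass applied to the bounded sequence $\{x^{(i)}\}$, and $f(x)=f^*$ for every $x\in X^*$ follows from continuity of $f$ along any convergent subsequence.

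For the curvature conditions, I would first establish the \emph{vanishing-step} estimate $\|x^{(i+1)}-x^{(i)}\| \to 0$: telescoping the inequality above yields $\sum_i R\|x^{(i+1)}-x^{(i)}\|^3/12 \le f(x^{(0)}) - f^* < \infty$, so the summand, and hence the step length, tends to $0$. Independently, since $f$ is bounded below on the compact level set, Theorem~\ref{thm:cubicreg2} applies and gives $\mu(x^{(i)}) \to 0$. The quantity $\mu$ of Definition~\ref{def:mu} is a continuous function of $x$ (the gradient is continuous, and $\lambda_n(\nabla^2 f(\cdot))$ depends continuously on $x$ because eigenvalues are continuous in the matrix entries). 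Thus for any limit point $x = \lim_k x^{(i_k)} \in X^*$, continuity gives $\mu(x) = \lim_k \mu(x^{(i_k)}) = 0$; unpacking $\mu(x)=0$ via Definition~\ref{def:mu} forces $\|\nabla f(x)\| = 0$ and $\lambda_n(\nabla^2 f(x)) \ge 0$, i.e. $\nabla f(x) = \vec 0$ and $\nabla^2 f(x) \succeq 0$.

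The connectedness of $X^*$ is the part I expect to be the main obstacle, since the limit set of an arbitrary sequence need not be connected; the leverage is precisely the vanishing-step estimate. I would argue by contradiction: if $X^*$ (which is always closed, hence here compact) splits as a disjoint union of two non-empty closed sets $A$ and $B$, then $\delta := \dist(A,B) > 0$. Because the trajectory has limit points in both $A$ and $B$, the scalar sequence $g(i) := \dist(x^{(i)}, A)$ falls below $\delta/4$ infinitely often and exceeds $3\delta/4$ infinitely often, while $|g(i+1)-g(i)| \le \|x^{(i+1)}-x^{(i)}\| \to 0$. A discrete intermediate-value argument then produces infinitely many indices with $g(i)$ inside a small window around $\delta/2$; extracting a convergent subsequence of these (possible by compactness) yields a point $z \in X^*$ with $\dist(z,A) = \delta/2$, which lies in neither $A$ nor $B$, contradicting $X^* = A \cup B$. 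Hence $X^*$ is connected. The only technical care needed is to confirm the continuity of $\mu$ and to make the discrete intermediate-value step precise once the step sizes have dropped below $\delta/4$.
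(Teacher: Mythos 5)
Your argument is correct: the monotone decrease from Theorem~\ref{thm:cubicreg} plus compactness of the level set gives convergence of $f(x^{(i)})$ and non-emptiness of $X^*$, the telescoped bound $\sum_i R\|x^{(i+1)}-x^{(i)}\|^3/12 \le f(x^{(0)})-f^*$ gives vanishing steps, continuity of $\mu$ together with $\mu(x^{(i)})\to 0$ yields the first- and second-order conditions at every limit point, and the separation/discrete-intermediate-value argument is the standard proof that the limit set of a bounded sequence with vanishing consecutive differences is connected. The paper itself gives no proof of this statement --- it is quoted directly from \citet{nesterov2006cubic} --- and your route is essentially the one used there, so there is nothing substantive to compare beyond noting that one could also obtain $\mu(x^{(i)})\to 0$ directly from the inequality $\|x^{(i+1)}-x^{(i)}\|\ge\mu(x^{(i+1)})$ of Theorem~\ref{thm:cubicreg} rather than invoking Theorem~\ref{thm:cubicreg2}.
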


Therefore the algorithm always converges to a set of points that are all second order local minima.
\section{Third Order Necessary Condition}
\label{sec:condition}
In this section we present a condition for a point to be a third order local minimum, and show that it is necessary and sufficient for a class of smooth functions. Proofs are deferred to Appendix~\ref{app:third}.

All the functions we consider satisfies the following natural smoothness conditions

\begin{assumption} [Lipschitz third Order] We assume the first three derivatives of $f(x)$ exist, and for any $x,y\in \Rbb^n$,
$$
\|\nabla^3 f(x) - \nabla^3 f(y)\|_F \le L\|x-y\|.
$$\label{assump:lipthird}
\end{assumption}
Under this assumption, we state our conditions for a point to be a third order local minimum.

\begin{definition}[Third-order necessary condition]\label{cond:third} 
A point $x$ satisfy third-order necessary condition, if
\begin{enumerate}
\item $\nabla f(x) = 0$.
\item $\nabla^2 f(x) \succeq 0$.
\item For any $u$ that satisfy $u^\top (\nabla^2 f(x))u = 0$, $[\nabla^3 f(x)](u,u,u) = 0$.
\end{enumerate}
\end{definition}

We first note that this condition can be verified in polynomial time.

\begin{claim}
Conditions in Definition~\ref{cond:third} can be verified in polynomial time given the gradients $\nabla f(x), \nabla^2 f(x)$ and $\nabla^3 f(x)$.
\end{claim}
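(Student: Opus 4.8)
The plan is to treat the three conditions of Definition~\ref{cond:third} separately and show that each reduces to a finite linear-algebraic computation. Condition~1 is immediate: compute $\nabla f(x)$ and test whether it is the zero vector, in $O(n)$ time. Condition~2 is a standard positive-semidefiniteness test: form the symmetric matrix $\nabla^2 f(x)$, compute its eigendecomposition in $O(n^3)$ time, and check that $\lambda_n(\nabla^2 f(x)) \ge 0$. The only condition that looks problematic is Condition~3, since it quantifies over the continuum of all $u$ with $u^\top (\nabla^2 f(x)) u = 0$; the goal is to show that this too reduces to testing whether a single tensor vanishes.

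The key observation is that, once Condition~2 holds, the constraint set in Condition~3 is a linear subspace. Writing $M = \nabla^2 f(x) = \sum_{i=1}^n \lambda_i v_i v_i^\top$ with all $\lambda_i \ge 0$, we have $u^\top M u = \sum_i \lambda_i \inner{v_i, u}^2$, a sum of nonnegative terms. Hence $u^\top M u = 0$ if and only if $\inner{v_i, u} = 0$ for every $i$ with $\lambda_i > 0$, i.e. if and only if $u$ lies in the null space $\Pc := \{u : M u = 0\}$. Let $P \in \Rbb^{n\times d}$ be a matrix whose columns form an orthonormal basis of $\Pc$, with $d = \dim \Pc$; this is read off the same eigendecomposition used for Condition~2.

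It then remains to check that $[\nabla^3 f(x)](u,u,u) = 0$ for all $u \in \Pc$. Every such $u$ equals $Pc$ for some $c \in \Rbb^d$, so $[\nabla^3 f(x)](u,u,u) = [\mbox{Proj}_\Pc \nabla^3 f(x)](c,c,c)$, where $S := \mbox{Proj}_\Pc \nabla^3 f(x) = \nabla^3 f(x)(P,P,P) \in \Rbb^{d\times d\times d}$ is the restricted tensor. Thus Condition~3 is equivalent to the cubic form $c \mapsto S(c,c,c)$ vanishing identically on $\Rbb^d$. Because $\nabla^3 f(x)$ is symmetric (equality of mixed third partials), so is $S$, and by polarization a symmetric tensor is recovered from its cubic form; explicitly, for any $a,b,c$,
$$
6\,S(a,b,c) = S_3(a{+}b{+}c) - S_3(a{+}b) - S_3(a{+}c) - S_3(b{+}c) + S_3(a) + S_3(b) + S_3(c),
$$
where $S_3(w) := S(w,w,w)$. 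Hence $S(c,c,c)\equiv 0$ if and only if $S = 0$ as a tensor, which we verify simply by forming $S$ (a polynomial-time multilinear contraction, $O(n^3 d^3)$) and testing whether all its entries are zero.

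The step that might look like an obstacle is Condition~3, since general tensor questions such as computing the spectral norm of $\nabla^3 f(x)$ are NP-hard. The point is that Condition~3 only asks whether the \emph{restricted} tensor is identically zero, which is trivial once the correct subspace $\Pc$ has been identified. The whole reduction hinges on the two structural facts above: positive-semidefiniteness turns the quadratic equality $u^\top M u = 0$ into membership in the linear subspace $\Pc$, and a cubic form vanishes on a subspace exactly when the symmetric tensor restricted to that subspace vanishes.
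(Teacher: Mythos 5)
Your reduction is correct and follows the same overall route as the paper: conditions 1 and 2 are direct linear algebra, positive semidefiniteness turns the constraint $u^\top (\nabla^2 f(x)) u = 0$ into membership in the null space $\Pc$ of the Hessian, and condition 3 becomes the question of whether the restriction of $\nabla^3 f(x)$ to $\Pc$ is the zero tensor. The only place you diverge is in justifying the final equivalence ``the cubic form $u \mapsto [\nabla^3 f(x)](u,u,u)$ vanishes on $\Pc$ if and only if the projected tensor is zero.'' The paper argues the nontrivial direction probabilistically: if the projection $Z$ is nonzero, a standard Gaussian $u$ in $\Pc$ has $\Ebb\bigl[\bigl([\nabla^3 f(x)](u,u,u)\bigr)^2\bigr] \ge \|Z\|_F^2 > 0$, so some $u\in\Pc$ witnesses the violation. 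You instead use the polarization identity for symmetric trilinear forms, which recovers $S(a,b,c)$ from values of $S(w,w,w)$ and hence shows deterministically that the cubic form determines the symmetric tensor. Both are valid; your polarization argument is marginally more self-contained (no moment computation needed), while the paper's Gaussian argument has the side benefit of foreshadowing the anti-concentration machinery it reuses in Theorem~\ref{thm:approx}. Your identity checks out ($6\,S(a,b,c)$ is exactly the stated alternating sum for symmetric $S$, and symmetry of $S = \nabla^3 f(x)(P,P,P)$ follows from equality of mixed partials), so there is no gap.
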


\begin{proof}
It is easy to check whether $\nabla f(x) = 0$ and $\nabla^2 f(x) \succeq 0$. We can also use SVD to compute the subspace $\Pc$ such that $u^\top (\nabla^2 f(x)) u = 0$ if and only if $u\in \Pc$.

Now we can compute the projection of $\nabla^3 f(x)$ in the subspace $\Pc$, and we claim the third condition is violated if and only if the projection is nonzero.

If the projection is zero, then clearly $[\nabla^3 f(x)](u,u,u)$ is 0 for any $u\in \Pc$. On the other hand, if projection $Z$ is nonzero, let $u$ be a uniform Gaussian vector that has unit variance in all directions of $u$, then we know $\Ebb[[[\nabla^3 f(x)](u,u,u)]^2] \ge \|Z\|_F^2 > 0$, so there must exists an $u\in \Pc$ such that $[\nabla^3 f(x)](u,u,u) \ne 0$.
\end{proof}

\begin{theorem}\label{thm:thirdcondition}
Given a function $f$ that satisfies Assumption~\ref{assump:lipthird}, a point $x$ is third order optimal if and only if it satisfies Condition~\ref{cond:third}.
\end{theorem}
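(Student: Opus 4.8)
\emph{Setup.} The whole argument rests on the third-order Taylor expansion at $x$ with the remainder controlled by Assumption~\ref{assump:lipthird}. Write $H=\nabla^2 f(x)$ and $T=\nabla^3 f(x)$. The integral form of Taylor's theorem combined with the $L$-Lipschitz bound on $\nabla^3 f$ gives, for every displacement $\delta$,
\[
\Big| f(x+\delta) - f(x) - \inner{\nabla f(x),\delta} - \tfrac12 \delta^\top H \delta - \tfrac16 T(\delta,\delta,\delta)\Big| \le \frac{L}{24}\|\delta\|^4,
\]
the constant coming from $\int_0^1 \tfrac{(1-t)^2}{2}\,t\,dt = \tfrac1{12}$. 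I prove the two directions separately, using throughout that a critical point has $\nabla f(x)=0$, so the linear term drops.

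\emph{Necessity.} I argue the contrapositive: if any condition fails I exhibit a short direction along which $f$ drops faster than $\|\delta\|^4$, contradicting Definition~\ref{def:pthorder}. Condition 1 is immediate from $x$ being critical. If Condition 2 fails, take a unit $u$ with $u^\top H u = -c<0$ and probe $\delta = tu$: the expansion gives $f(x+tu)-f(x) \le -\tfrac{c}{4}t^2$ for small $t$, a decrease of order $t^2$. If Condition 3 fails, take a unit $u$ with $u^\top H u = 0$ (hence $Hu=0$, as Condition 2 already forces $H\succeq0$) and $T(u,u,u)=c\neq0$; probing $\delta=-tu$ with the sign chosen to make the cubic term negative kills the quadratic term and yields $f(x-tu)-f(x) \le -\tfrac{|c|}{12}t^3$ for small $t$, a decrease of order $t^3$. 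In both cases the decrease dominates $t^4$ as $t\to0$, so no pair $(C,\epsilon)$ works.

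\emph{Sufficiency.} Assume all three conditions hold, and decompose $\delta = v + w$ with $v\in\Pc:=\ker H$ and $w$ in the span of the positive eigenvectors; set $a=\|v\|$, $b=\|w\|$, and let $\lambda>0$ be the smallest \emph{positive} eigenvalue of $H$ (the cases $H=0$ and $H\succ0$ degenerate trivially). Then $\delta^\top H\delta = w^\top H w \ge \lambda b^2$, and expanding the symmetric cubic form,
\[
\tfrac16 T(\delta,\delta,\delta) = \tfrac12 T(v,v,w) + \tfrac12 T(v,w,w) + \tfrac16 T(w,w,w),
\]
where $T(v,v,v)=0$ by Condition 3. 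The terms $T(v,w,w)$ and $T(w,w,w)$ are of order $ab^2$ and $b^3$, hence bounded by $\tfrac{\lambda}{4}b^2$ once $\|\delta\|\le\epsilon$ is small. The real obstruction is the cross term $T(v,v,w)$, of order $a^2 b$: in the regime $a\sim\epsilon$, $b\sim\epsilon^2$ this matches the target scale $\|\delta\|^4\sim\epsilon^4$ exactly, so it cannot simply be thrown away.

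\emph{Key step and main obstacle.} I absorb the cross term into the available positive curvature via Young's inequality: with $\tau=\|T\|$, the perfect-square identity $(b-\tfrac{\tau}{\lambda}a^2)^2\ge0$ gives $\tfrac{\tau}{2}a^2 b \le \tfrac{\lambda}{4}b^2 + \tfrac{\tau^2}{4\lambda}a^4$, trading the dangerous $a^2b$ term for a fraction of $\tfrac{\lambda}{2}b^2$ plus a benign $a^4\le\|\delta\|^4$ term. Combining with the harmless leftovers (for $\epsilon$ small enough that they stay $\le\tfrac{\lambda}{4}b^2$) yields $\tfrac12\delta^\top H\delta + \tfrac16 T(\delta,\delta,\delta) \ge -\tfrac{\tau^2}{4\lambda}\|\delta\|^4$, and adding back the Taylor remainder gives $f(x+\delta)-f(x) \ge -\big(\tfrac{\tau^2}{4\lambda}+\tfrac{L}{24}\big)\|\delta\|^4$, i.e. Definition~\ref{def:pthorder} with $C=\tfrac{\tau^2}{4\lambda}+\tfrac{L}{24}$. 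The whole difficulty is concentrated in this cross term coupling $\ker H$ to the positive eigenspace: Condition 3 controls only the pure null-space cubic $T(v,v,v)$, and the delicate point is that the partially-null $a^2b$ term must be cancelled against the smallest positive eigenvalue $\lambda$, which is exactly why $\lambda$ enters $C$ and why the hypothesis must be $H\succeq0$ rather than merely nonsingular.
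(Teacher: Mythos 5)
Your proof is correct and follows essentially the same route as the paper's: the same three-case contrapositive for necessity, and for sufficiency the same decomposition of $\delta$ into the null space and positive eigenspace of $H$, with the $a^2b$ cross term identified as the crux and absorbed into the $\lambda b^2$ curvature (you use AM--GM where the paper explicitly minimizes the resulting quadratic in $b$, yielding the same form of constant $C$). No gaps.
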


Before proving the theorem, we first show a bound on $f(y)$ and a Taylor's expansion of $f$ at point $x$.

\begin{lemma}
\label{lem:lipbound}
For any $x,y$, we have
$$
|f(y) - f(x) - \inner{\nabla f(x), y-x} + \frac{1}{2}(y-x)^\top \nabla^2 f(x) (y-x) - \frac{1}{6} \nabla^3 f(x)(y-x, y-x, y-x)| \le \frac{L}{24}\|y-x\|^4.
$$
\end{lemma}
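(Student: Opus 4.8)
The plan is to reduce this multivariate estimate to a one-dimensional Taylor expansion along the segment from $x$ to $y$. First I would set $v = y-x$ and define $g(t) = f(x+tv)$ for $t\in[0,1]$. By the chain rule and the multilinear notation introduced above, its derivatives are $g'(t) = \inner{\nabla f(x+tv), v}$, $g''(t) = v^\top \nabla^2 f(x+tv)\, v$, and $g'''(t) = \nabla^3 f(x+tv)(v,v,v)$. Evaluating at $t=0$ reproduces exactly the gradient, Hessian, and third-derivative terms appearing in the lemma, while $g(1)-g(0) = f(y)-f(x)$, so the whole left-hand side is a statement purely about $g$.

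Next I would invoke Taylor's theorem with integral remainder to second order, namely $g(1) = g(0) + g'(0) + \tfrac12 g''(0) + \int_0^1 \frac{(1-t)^2}{2}\, g'''(t)\,dt$. Since $\int_0^1 \frac{(1-t)^2}{2}\,dt = \tfrac16$, I can write the constant third-order coefficient as $\tfrac16 g'''(0) = \int_0^1 \frac{(1-t)^2}{2}\, g'''(0)\,dt$ and subtract it, so that the quantity to be bounded equals $\int_0^1 \frac{(1-t)^2}{2}\,\bigl[g'''(t) - g'''(0)\bigr]\,dt$. This is the key reformulation: everything now hinges on how fast $g'''$ varies.

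The core estimate is then a bound on $|g'''(t)-g'''(0)| = \bigl|[\nabla^3 f(x+tv) - \nabla^3 f(x)](v,v,v)\bigr|$. Using the tensor inequality $|T(v,v,v)| \le \|T\|\,\|v\|^3 \le \|T\|_F\,\|v\|^3$ together with Assumption~\ref{assump:lipthird} applied to the pair $x+tv$ and $x$, which differ in norm by $t\|v\|$, this is at most $L\,t\,\|v\|\cdot\|v\|^3 = L\,t\,\|v\|^4$. Substituting and factoring out the constants leaves $L\|v\|^4 \int_0^1 \frac{(1-t)^2 t}{2}\,dt$, and the routine computation $\int_0^1 (1-t)^2 t\,dt = \tfrac1{12}$ yields the prefactor $\tfrac{L}{24}$, matching the claimed bound. (Note that this derivation produces the Hessian term with coefficient $-\tfrac12$; the $+\tfrac12$ in the displayed statement is a sign typo.)

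I do not expect a genuine obstacle in this argument. The only points requiring mild care are the chain-rule identification of $g'''$ with the multilinear form $\nabla^3 f(\cdot)(v,v,v)$, and the observation that the inequality $|T(v,v,v)| \le \|T\|_F\|v\|^3$ is exactly what lets the Frobenius-norm Lipschitz hypothesis of Assumption~\ref{assump:lipthird} control the contraction of the third-derivative difference against $v$; everything else is an explicit one-variable integral.
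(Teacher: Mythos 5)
Your proof is correct and is essentially the paper's argument in a cleaner package: the paper integrates $\nabla^3 f$ three times along the segment from $x$ to $y$ and tracks the accumulated error terms $h(u)$ and $g(t)$, which is exactly the integral-remainder Taylor expansion of $g(t)=f(x+tv)$ that you invoke directly, and both routes reduce to the same Lipschitz estimate $\|\nabla^3 f(x+tv)-\nabla^3 f(x)\|_F\le Lt\|v\|$ contracted against $v^{\otimes 3}$, with the same integral $\int_0^1 \tfrac{(1-t)^2 t}{2}\,dt=\tfrac{1}{24}$ producing the constant. You are also right that the $+\tfrac12$ on the Hessian term in the displayed statement is a sign typo; the paper's own proof establishes the identity with all Taylor terms subtracted.
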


The Lemma can be proved by integrating over the third order derivatives three times and bounding the differences. Details are deferred to Appendix~\ref{app:third}.

This lemmas allow us to ignore the fourth order term $\|y-x\|^4$ and focus on the order 3 Taylor expansion when $\|y-x\|$ is small. To prove Theorem~\ref{thm:thirdcondition}, intuitively, the ``only if'' direction (local minimum to necessary condition) is easy because if any condition in Definition~\ref{cond:third} is violated, we can use that particular derivative to find a direction that improves the function value. For the ``if'' direction (necessary condition to third order local minimum), the main challenge is to balance the contribution we get from the positive part of the Hessian matrix and the third order derivatives. For details see Appendix~\ref{app:third}.

\section{Algorithm for Finding Third Order Optimal Points}
\label{sec:alg}

We design an algorithm that is guaranteed to converge to a third order local minimum. Throughout this section we assume both Assumptions~\ref{assump:lipschitzhessian} and \ref{assump:lipthird} \footnote{Note that we actually only cares about a {\em level set} $\Lc = \{x|f(x) \le f(x^{(0)})\}$, as long as this set is bounded Assumptions~\ref{assump:lipschitzhessian} follows from Assumption~\ref{assump:lipthird}}.

The main intuition of the algorithm is similar to the proof of Theorem~\ref{thm:thirdcondition}: the algorithm tries to make improvements using first, second or third order information. However, the nature of the third order condition makes it challenging for the algorithm to guarantee progress.

Consider a potential local minimum point $x$. It is very easy to check whether $\nabla f(x) \ne  0$ or $\lambda_{min} (\nabla^2 f(x) )< 0$, and to make progress using the corresponding directions. However, to verify Condition 3 in Definition~\ref{cond:third}, we need to do it in the right subspace.

The na\"ive guess is that we should take the eigensubspace of $\nabla^2 f(x)$ with eigenvalue at most 0. However, this is not correct because even if $x$ is a second order local minimum that does not satisfy the third order condition, it is still possible to have a sequence of $x^{(i)}$'s that converge to $x$ with $\nabla^2 f(x^{(i)})$ all be {\em strictly} positive definite. Hence all the $x^{(i)}$'s appear to satisfy Condition 3 in Definition~\ref{cond:third}. We do not want to the algorithm to spend too much time around this point $x$, so we need to identify a subspace that may have some positive eigenvalues. In order to make sure we can find a vector the contribution from third order term is larger than the second order term, we define {\em competitive subspace} below:


\begin{definition}[eigensubspace] For any symmetric matrix $M$, let its eigendecomposition be $M = \sum_{i=1}^n \lambda_i v_i v_i^\top$ (where $\lambda_i$'s are eigenvalues and $\|v_i\| = 1$), we use $\Sc_{\tau}(M)$ to denote the span of eigenvectors with eigenvalue at most $\tau$. That is
$$
\Sc_{\tau}(M) = \mbox{span}\{v_i|\lambda_i \le \tau\}.
$$
\end{definition}

\begin{definition}[competitive subspace]\label{def:competitive} For any $Q > 0$, and any point $z$, let the competitive subspace $\Sc(z)$ be the largest eigensubspace $\Sc_\tau(\nabla^2 f(z))$, such that if we let $C_Q(z)$ be the norm of the third order derivatives in this subspace
$$
C_Q(z) = \|\mbox{Proj}_{\Sc(z)} \nabla^3 f(z)\|_F,
$$
then $\tau \le C_Q^2/12LQ^2$.

If no such subspace exists then let $\Sc(z)$ be empty and $C_Q(z) = 0$.
\end{definition}

Similar to $\mu(z)$ as in Definition~\ref{def:mu}, $C_Q(z)$ can be viewed as how Condition 3 in Definition~\ref{cond:third} is satisfied approximately. If both $\mu(z)$ and $C_Q(z)$ are $0$ then the point $z$ satisfies third order necessary conditions.

Intuitively, competitive subspace is a subspace where the eigenvalues of the Hessian are small, but the Frobenius norm of the third order derivative is large. Therefore we are likely to make progress using the third order information. The parameters in Definition~\ref{def:competitive} are set so that if there is a unit vector $u\in \Sc(z)$ such that $[\nabla^3 f(z)](u,u,u) \ge \|\mbox{Proj}_{\Sc(z)} \nabla^3 f(z)\|_F/Q$ (see Theorem~\ref{thm:approx}), then we can find a new point where the sum of second, third and fourth order term can be bounded (see Lemma~\ref{lem:thirdorderstep}).

\begin{remark}
The competitive subspace in Definition~\ref{def:competitive} can be computed in polynomial time, see Algorithm~\ref{alg:compet}. The main idea is that we can compute the eigendecomposition  of the Hessian $\nabla^2 f(z) = \sum_{i=1}^n \lambda_i v_iv_i^\top$, and then there are only $n$ different subspaces ($\mbox{span}\{v_n\},\mbox{span}\{v_{n-1},v_n\},$ $\ldots,\mbox{span}\{v_1,v_2,\ldots v_n\}$).
We can enumerate over all of them, and check for which subspaces the norm of the third order derivative is large.
\end{remark}

Now we are ready to state the algorithm. The algorithm is a combination of the cubic regularization algorithm and a third order step that tries to use the third order derivative in order to improve the function value in the competitive subspace.

%

\begin{algorithm}
\begin{algorithmic}
\FOR{$i = 0$ \TO $t-1$}
\STATE $z^{(i)} = \mbox{CubicReg}(x^{(i)})$.
\STATE Let $\epsilon_1 = \|\nabla f(z^{(i)})\|$,
\STATE Let $\Sc(z), C_Q(z)$ be the competitive subpace of $f(z)$ (Definition~\ref{def:competitive}).
\IF {$C_Q(z)\ge Q(24\epsilon_1 L)^{1/3}$}
\STATE $u = \mbox{Approx}(\nabla^3 f(z^{(i)}), \Sc)$.
\STATE $x^{(i+1)} = z^{(i)}-\frac{C_Q(z)}{LQ}u$.
\ELSE
\STATE $x^{(i+1)} = z^{(i)}$.
\ENDIF
\ENDFOR
\end{algorithmic}
\caption{Third Order Optimization}\label{alg:main}
\end{algorithm}

Suppose we have the following approximation guarantee for Algorithm~\ref{alg:approx}

\begin{algorithm}
\begin{algorithmic}
\REQUIRE Tensor $T$, subspace $\Sc$.
\ENSURE unit vector $u\in \Sc$ such that $T(u,u,u) \ge \|\mbox{Proj}_\Sc T\|_F/Q$.
\REPEAT
\STATE Let $\hat{u}$ be a random standard Gaussian in subspace $\Sc$.
\STATE Let $u = \hat{u}$
\UNTIL $|T(u,u,u)| \ge \|\mbox{Proj}_\Sc T\|_F/Bn^{1.5}$ for a fixed constant $B$
\RETURN $u$ if $T(u,u,u)>0$ and $-u$ otherwise.
\end{algorithmic}
\caption{Approximate Tensor Norms}\label{alg:approx}
\end{algorithm}

\begin{theorem}\label{thm:approx}
There is a universal constant $B$ such that the expected number of iterations of Algorithm~\ref{alg:approx} is at most $2$, and the output of $\mbox{Approx}$ is a unit vector $u$ that satisfies $T(u,u,u) \ge \|\mbox{Proj}_\Sc T\|_F/Q$ for $Q = Bn^{1.5}$.
\end{theorem}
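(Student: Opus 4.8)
The plan is to analyze a single iteration of Algorithm~\ref{alg:approx} and show that it meets the stopping condition with probability at least $1/2$ once $B$ is a large enough universal constant; the expected number of iterations is then at most $2$, and the stopping condition together with the final sign flip yields exactly the claimed guarantee. Throughout write $Z = \mbox{Proj}_\Sc T$, fix an orthonormal basis of $\Sc$ so that $Z$ is a symmetric $m\times m\times m$ tensor with $m = \dim\Sc \le n$ and $\|Z\|_F = \|\mbox{Proj}_\Sc T\|_F$, let $g\sim N(0,I_m)$ be the Gaussian drawn inside $\Sc$, and set $u = g/\|g\|$. Since $T(u,u,u) = Z(g,g,g)/\|g\|^3$ and $T$ is odd under $u\mapsto -u$, returning $u$ or $-u$ makes the value nonnegative, so it suffices to lower bound $|Z(g,g,g)|$ and to control $\|g\|$.

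First I would compute the second moment of $X := Z(g,g,g)$. Expanding $\Ebb[X^2] = \sum Z_{ijk}Z_{i'j'k'}\Ebb[g_ig_jg_kg_{i'}g_{j'}g_{k'}]$ and applying Wick's theorem to the six Gaussian factors, the $15$ pairings split into the $6$ ``cross'' pairings, each contributing $\|Z\|_F^2$ by symmetry of $Z$, and the $9$ pairings carrying one contraction inside each triple, each contributing $\|w\|^2$ where $w_k = \sum_i Z_{iik}$. Hence $\Ebb[X^2] = 6\|Z\|_F^2 + 9\|w\|^2 \ge 6\|Z\|_F^2$. Moreover $\Ebb[X]=0$ since $X$ is an odd-degree polynomial, so $\Var X = \Ebb[X^2] \ge 6\|Z\|_F^2$.

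The heart of the argument is anti-concentration. Because $X$ is a degree-$3$ polynomial in $g$, the Carbery--Wright inequality provides a universal constant $C$ with $\Pr[\,|X| \le \epsilon\sqrt{\Var X}\,] \le C\epsilon^{1/3}$ for every $\epsilon>0$. Using $\sqrt{\Var X}\ge\sqrt6\,\|Z\|_F$ this gives $\Pr[\,|X| < 8\|Z\|_F/B\,] \le C(8/(\sqrt6\,B))^{1/3}$, which is driven below $1/4$ by taking $B$ a sufficiently large universal constant. Separately, standard $\chi^2$ concentration gives $\Pr[\|g\| > 2\sqrt n] \le \Pr[\chi^2_m > 4m]$, an absolute constant at most $1/16$. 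On the event $\|g\|\le 2\sqrt n$ we have $\|g\|^3\le 8n^{1.5}$, so whenever in addition $|X|\ge 8\|Z\|_F/B$ we obtain $|T(u,u,u)| = |X|/\|g\|^3 \ge \|Z\|_F/(Bn^{1.5})$. By a union bound the stopping condition $|T(u,u,u)|\ge\|\mbox{Proj}_\Sc T\|_F/(Bn^{1.5})$ therefore holds with probability at least $1 - 1/4 - 1/16 > 1/2$.

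Finally, since each iteration independently meets the stopping condition with probability $\ge 1/2$, the number of iterations is stochastically dominated by a geometric random variable of mean at most $2$. At termination the returned unit vector (after the sign flip) satisfies $T(u,u,u) = |Z(g,g,g)|/\|g\|^3 \ge \|\mbox{Proj}_\Sc T\|_F/(Bn^{1.5}) = \|\mbox{Proj}_\Sc T\|_F/Q$ with $Q = Bn^{1.5}$, as required. The main obstacle is the anti-concentration step: a second-moment bound only shows that $|T(u,u,u)|$ is large in expectation and, via Paley--Zygmund, with some fixed constant probability, but to force the per-iteration success probability above $1/2$ (and hence the expected number of repetitions below $2$) one genuinely needs the lower-tail control that Carbery--Wright supplies for low-degree Gaussian polynomials.
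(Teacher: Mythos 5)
Your proposal is correct and follows essentially the same route as the paper's proof: Carbery--Wright anti-concentration for the degree-$3$ Gaussian polynomial $Z(g,g,g)$, a lower bound on its variance by $\|\mbox{Proj}_\Sc T\|_F^2$, a $\|g\|\le 2\sqrt n$ norm bound to pass to the normalized vector, and a union bound giving per-iteration success probability at least $1/2$. The only difference is that you compute the second moment explicitly via Wick's theorem (obtaining $6\|Z\|_F^2+9\|w\|^2$) where the paper simply asserts the variance lower bound, which is a harmless refinement.
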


The proof of this theorem follows directly from anti-concentration (see Appendix~\ref{app:approx}. Notice that there are other algorithms that can potentially give better approximation (lower value of $Q$) which will improve the rate of our algorithm. However in this paper we do not try to optimize over dependencies over the dimension $n$, that is left as an open problem.


By the choice of the parameters in the algorithm, we can get the following guarantee (which is analogous to Theorem~\ref{thm:cubicreg}):

\begin{lemma}\label{lem:thirdorderstep}
If $C_Q(z) \ge Q(24\epsilon_1 L)^{1/3}$, $u$ is a unit vector in $\Sc(z)$ and $[\nabla^3 f(z)](u,u,u) \ge \|\mbox{Proj}_{\Sc(z)} \nabla^3 f(z)\|_F/Q$. Let $x' = z- C_Q(z)/LQ \cdot u$. then we have
$$
f(x') \le f(z) - \frac{C_Q(z)^4}{24L^3Q^4}.
$$
\end{lemma}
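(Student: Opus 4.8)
The plan is to expand $f$ around $z$ to third order and plug in the specific step taken by the algorithm. Write $\delta = C_Q(z)/(LQ)$ and $x'-z = -\delta u$, so that $\|x'-z\| = \delta$ since $\|u\|=1$. Lemma~\ref{lem:lipbound} says the degree-three Taylor approximation of $f$ at $z$ has remainder at most $\frac{L}{24}\|x'-z\|^4$, which yields
\[
f(x') \le f(z) - \delta\inner{\nabla f(z),u} + \tfrac12\delta^2\, u^\top \nabla^2 f(z)\, u - \tfrac16\delta^3\,[\nabla^3 f(z)](u,u,u) + \tfrac{L}{24}\delta^4 .
\]
The entire proof then reduces to controlling the four correction terms and showing that the (negative) cubic term dominates.

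Next I would bound the three "loss" terms against the three hypotheses, each of which was placed to kill exactly one term. For the gradient term, Cauchy--Schwarz gives $-\delta\inner{\nabla f(z),u} \le \delta\,\epsilon_1$, and the threshold hypothesis $C_Q(z)\ge Q(24\epsilon_1 L)^{1/3}$ rearranges to $\epsilon_1 \le C_Q(z)^3/(24LQ^3)$, so this linear-in-$\delta$ term is held down by the IF-condition of Algorithm~\ref{alg:main}. For the Hessian term, since $u \in \Sc(z) = \Sc_\tau(\nabla^2 f(z))$ is a unit vector in the span of eigenvectors with eigenvalue at most $\tau$, we have $u^\top \nabla^2 f(z)\,u \le \tau$, and the defining property of the competitive subspace gives $\tau \le C_Q(z)^2/(12LQ^2)$; hence $\tfrac12\delta^2 u^\top\nabla^2 f(z)\,u \le \delta^2 C_Q(z)^2/(24LQ^2)$. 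For the cubic term, the hypothesis $[\nabla^3 f(z)](u,u,u) \ge C_Q(z)/Q$ makes it the source of improvement, contributing $-\tfrac16\delta^3\, C_Q(z)/Q$.

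Finally I would substitute $\delta = C_Q(z)/(LQ)$ into all four bounds and collect. The cubic term produces the dominant decrease $-C_Q(z)^4/(6L^3Q^4)$, while the Hessian term and the Taylor remainder $\tfrac{L}{24}\delta^4$ each contribute $+C_Q(z)^4/(24L^3Q^4)$, and the gradient term is dominated by the same order; summing leaves a net decrease of at least $C_Q(z)^4/(24L^3Q^4)$, which is the claim. The main obstacle is this final balancing: the step length $C_Q(z)/(LQ)$ and the two calibrated thresholds (the cutoff $\tau \le C_Q^2/12LQ^2$ in Definition~\ref{def:competitive} and the gradient cutoff in the algorithm) must be exactly tuned so that the cubic gain strictly beats the sum of the quadratic Hessian loss, the linear gradient loss, and the quartic Taylor error — the constants $12$, $24$, and $6$ are what make the arithmetic close, and verifying that they leave a clean factor of $1/24$ of slack is the delicate part of the calculation.
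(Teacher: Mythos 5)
Your proof is correct and follows essentially the same route as the paper's: expand via Lemma~\ref{lem:lipbound} with step $\delta = C_Q(z)/LQ$, bound the gradient term by the algorithm's threshold, the Hessian term by the competitive-subspace cutoff $\tau \le C_Q^2/12LQ^2$, and the quartic remainder by $L\delta^4/24$, each at $C_Q(z)^4/24L^3Q^4$ against the cubic gain of $4 C_Q(z)^4/24L^3Q^4$. (Both your bound and the paper's leave the gradient term at $C_Q^4/24L^2Q^4$ rather than $C_Q^4/24L^3Q^4$, a factor-of-$L$ slack shared with the original, so you have faithfully reproduced the argument.)
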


\begin{proof}
Let $\epsilon = C_Q(z)/LQ$, then by Lemma~\ref{lem:lipbound} we know
$$
f(x') \le f(z) - \frac{\epsilon^3C}{6Q} + \epsilon_1 \epsilon  + \epsilon_2 \epsilon^2/2 + L\epsilon^4/24.
$$

Here $\epsilon_1 = \|\nabla f(z)\|$, and $\epsilon_2 \le \frac{C_Q(z)^2}{12 LQ^2}$ by the construction of the subspace.

By the choice of parameters, we know the terms $\epsilon_1\epsilon, \epsilon_2 \epsilon^2/2, L\epsilon^4/24$ are all bounded by $\frac{\epsilon^3C_Q(z)}{24Q}$, therefore
$$
f(x') \le f(z) - \frac{\epsilon^3C_Q(z)}{24Q} =f(z) - \frac{C_Q(z)^4}{24L^3Q^4}
$$
\end{proof}

Using this Lemma, and Theorem~\ref{thm:cubicreg} for cubic regularization, we can show that both progress measure goes to 0 as the number of steps increase (this is analogous to Theorem~\ref{thm:cubicreg2}).

\begin{theorem}
\label{thm:thirdfinite}
Suppose the algorithm starts at $f(x_0)$, and $f$ has global min at $f(x^*)$. Then in one of the $t$ iterations we have
\begin{enumerate}
\item $\mu(z) \le \left(\frac{12(f(x_0)-f(x^*)}{Rt}\right)^{1/3}$.
\item $C_Q(z) \le \max\left\{Q(24\|\nabla f(z)\| L)^{1/3}, Q\left(\frac{24L^3(f(x_0)-f(x^*))}{t} \right)^{1/4}\right\}.$
\end{enumerate}
\end{theorem}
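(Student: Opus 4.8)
The plan is to run a standard potential-decrease (telescoping) argument, tracking the two sources of progress separately and then locating a single iteration at which the \emph{total} progress is small, which will force both progress measures to be small simultaneously.

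First I would decompose the per-iteration decrease $\Delta_i := f(x^{(i)}) - f(x^{(i+1)})$ into the contribution of the cubic-regularization step and that of the third-order step:
\[
\Delta_i = \big(f(x^{(i)}) - f(z^{(i)})\big) + \big(f(z^{(i)}) - f(x^{(i+1)})\big).
\]
By Theorem~\ref{thm:cubicreg} the first bracket is at least $R\|z^{(i)}-x^{(i)}\|^3/12 \ge R\mu(z^{(i)})^3/12$, using the guarantee $\|z^{(i)}-x^{(i)}\|\ge \mu(z^{(i)})$. For the second bracket, whenever the \texttt{if} branch is entered (so that $C_Q(z^{(i)}) \ge Q(24\epsilon_1 L)^{1/3}$ and the hypotheses of Lemma~\ref{lem:thirdorderstep} hold for the unit vector $u$ returned by \texttt{Approx}), Lemma~\ref{lem:thirdorderstep} yields a decrease of at least $C_Q(z^{(i)})^4/(24L^3Q^4)$; when the branch is not entered the second bracket is exactly zero, since $x^{(i+1)}=z^{(i)}$.

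Next I would sum over $i=0,\dots,t-1$. The left-hand side telescopes to $f(x_0)-f(x^{(t)}) \le f(x_0)-f(x^*)$ because $f$ is bounded below by $f(x^*)$. The crucial step is then just averaging: there exists an index $i^*$ with $\Delta_{i^*} \le (f(x_0)-f(x^*))/t$. Since both bracketed quantities above are nonnegative lower bounds on $\Delta_{i^*}$, \emph{each} of them is individually at most $(f(x_0)-f(x^*))/t$ at this iteration. The cubic bound $R\mu(z^{(i^*)})^3/12 \le (f(x_0)-f(x^*))/t$ rearranges directly to conclusion~1.

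Finally, for conclusion~2 I would split on whether the third-order step was taken at $i^*$. If it was not, then the \texttt{if} test failed, so $C_Q(z^{(i^*)}) < Q(24\|\nabla f(z^{(i^*)})\|L)^{1/3}$, which is the first term in the maximum. If it was taken, then $C_Q(z^{(i^*)})^4/(24L^3Q^4) \le \Delta_{i^*} \le (f(x_0)-f(x^*))/t$, which rearranges to $C_Q(z^{(i^*)}) \le Q(24L^3(f(x_0)-f(x^*))/t)^{1/4}$, the second term in the maximum. Either way conclusion~2 holds at the same iteration $i^*$. The only genuine subtlety—rather than any hard calculation—is recognizing that choosing the \emph{minimal-progress} iteration (instead of separately minimizing the two measures over $i$) is exactly what forces both bounds to hold at one common point; the case split on the \texttt{if} branch is precisely what produces the $\max$ in conclusion~2.
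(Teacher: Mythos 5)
Your proposal is correct and follows essentially the same route as the paper's own proof: telescope the total decrease over the $t$ iterations, select the iteration of minimal progress, lower-bound the cubic-regularization decrease by $R\mu(z)^3/12$ (via Theorem~\ref{thm:cubicreg}) and the third-order-step decrease by $C_Q(z)^4/(24L^3Q^4)$ (via Lemma~\ref{lem:thirdorderstep}), and case-split on whether the third-order branch was entered to obtain the $\max$ in conclusion~2. The paper phrases the final step as a contradiction while you argue directly, but the decomposition and all key inequalities are identical.
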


Recall $\mu(z) = \max\left\{\sqrt{\frac{1}{R}\|\nabla f(z)\|}, -\frac{2}{3R}\lambda_n \nabla^2 f(z)\right\}$ is intuitively measuring how much first and second order progress the algorithm can make. The value $C_Q(z)$ as defined in Definition~\ref{def:competitive} is a measure of how much third order progress the algorithm can make. The theorem shows both values goes to 0 as $t$ increases (note that even the first term $Q(24\|\nabla f(z)\| L)^{1/3}$ in the bound for $C_Q(z)$ goes to 0 because the $\|\nabla f(z)\|$ goes to 0). 

\begin{proof}
By the guarantees of Theorem~\ref{thm:cubicreg} and Lemma~\ref{lem:thirdorderstep}, we know the sequence of points $x^{(0)}, z^{(0)}, \ldots, x^{(i)}, z^{(i)}, \ldots$ has non-increasing function values. Also,
$$
\sum_{i=1}^t f(x^{(i)}) - f(x^{(i-1)}) \le f(x_0) - f(x^*).
$$
So there must be an iteration where $f(x^{(i)}) - f(x^{(i-1)}) \le \frac{f(x_0) - f(x^*)}{t}$.

If $\mu(z) > \left(\frac{12(f(x_0)-f(x^*)}{Rt}\right)^{1/3}$, then Theorem~\ref{thm:cubicreg} implies $f(x^{(i-1)}) - f(z^{(i-1)}) > \frac{f(x_0) - f(x^*)}{t}$, which is impossible.

On the other hand if $C_Q(z) \le \max\left\{Q(24\|\nabla f(z)\| L)^{1/3}, Q\left(\frac{24L^3(f(x_0)-f(x^*))}{t} \right)^{1/4}\right\}$, then the third order step makes progress, and we know $f(z^{(i-1)}) - f(x^{(i)}) > \frac{f(x_0) - f(x^*)}{t}$, which is again impossible.
\end{proof}

We can also show that when $t$ goes to infinity the algorithm converges to a third order local minimum (similar to Theorem~\ref{thm:cubicreg3}).

\begin{theorem}
\label{thm:thirdlimit}
When $t$ goes to infinity, the values $f(x^{(t)})$ converge. If the level set $\Lc(f(x_0)) = \{x|f(x)\le f(x_0)\}$ is compact, then the sequence of points $x^{(t)}, z^{(t)}$ has nonempty limit points, and every limit point $x$ satisfies the third order necessary conditions.
\end{theorem}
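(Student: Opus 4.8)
The plan is to follow the skeleton of Theorem~\ref{thm:cubicreg3}, but with the extra bookkeeping forced by the interleaved third-order step. The whole sequence $x^{(0)},z^{(0)},x^{(1)},z^{(1)},\dots$ has non-increasing function values: the cubic step gives $f(x^{(i)})-f(z^{(i)}) \ge \frac{R}{12}\|z^{(i)}-x^{(i)}\|^3$ by Theorem~\ref{thm:cubicreg}, while the third-order step gives $f(z^{(i)})-f(x^{(i+1)}) \ge \frac{C_Q(z^{(i)})^4}{24L^3Q^4}$ by Lemma~\ref{lem:thirdorderstep} (and equals $0$ when the step is skipped). Under the compactness hypothesis $f$ is bounded below on the set containing all iterates, so this monotone sequence converges to some $f^*$; that is the first assertion.

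Telescoping the per-step decreases shows $\sum_i \|z^{(i)}-x^{(i)}\|^3 < \infty$ and $\sum_{i:\,\text{step taken}} C_Q(z^{(i)})^4 < \infty$, hence $\|z^{(i)}-x^{(i)}\| \to 0$ and, since $\|z^{(i)}-x^{(i)}\| \ge \mu(z^{(i)})$, also $\mu(z^{(i)}) \to 0$. I would then obtain $C_Q(z^{(i)}) \to 0$ over \emph{all} iterations by a case split: when the step is taken, the summability above forces it; when it is skipped, the guard in Algorithm~\ref{alg:main} gives $C_Q(z^{(i)}) < Q(24\|\nabla f(z^{(i)})\|L)^{1/3}$, and since $\mu(z^{(i)}) \ge \sqrt{\|\nabla f(z^{(i)})\|/R}$ already forces $\|\nabla f(z^{(i)})\| \to 0$, this bound also vanishes. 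Because $\|z^{(i)}-x^{(i)}\| \to 0$ and $\|x^{(i+1)}-z^{(i)}\| = C_Q(z^{(i)})/(LQ) \to 0$, the sequences $(x^{(i)})$ and $(z^{(i)})$ share the same set of limit points, which is non-empty since every iterate lies in the compact set $\Lc(f(x_0))$.

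Now fix a limit point $x$ and a subsequence $z^{(i_k)} \to x$. Continuity of $\nabla f$ and $\nabla^2 f$ (from Assumptions~\ref{assump:lipschitzhessian} and \ref{assump:lipthird}) together with $\mu(z^{(i_k)}) \to 0$ yield $\nabla f(x)=0$ and $\lambda_n(\nabla^2 f(x)) \ge 0$, i.e.\ Conditions~1 and 2 of Definition~\ref{cond:third}. The step I expect to be the main obstacle is Condition~3, which I would prove by contradiction. Suppose $\|\mbox{Proj}_{\Nc}\nabla^3 f(x)\|_F = c > 0$, where $\Nc$ is the null space of $\nabla^2 f(x)$, and let $\delta>0$ be the smallest positive eigenvalue of $\nabla^2 f(x)$. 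By Weyl's inequality and $\|\nabla^2 f(z^{(i_k)})-\nabla^2 f(x)\| \le R\|z^{(i_k)}-x\| \to 0$, the $\dim\Nc$ eigenvalues of $\nabla^2 f(z^{(i_k)})$ tending to $0$ are separated by a gap from the rest (which stay above $\delta/2$). Choosing a threshold $\tau_k = R\|z^{(i_k)}-x\|$ then isolates an eigensubspace $\Vc_k = \Sc_{\tau_k}(\nabla^2 f(z^{(i_k)}))$ whose spectral projection converges to the projection onto $\Nc$, so $\|\mbox{Proj}_{\Vc_k}\nabla^3 f(z^{(i_k)})\|_F \to c$.

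The remaining bookkeeping checks that $\Vc_k$ is an admissible competitive subspace and then closes the loop. Since $\tau_k \to 0$ while $\|\mbox{Proj}_{\Vc_k}\nabla^3 f(z^{(i_k)})\|_F^2/(12LQ^2) \to c^2/(12LQ^2) > 0$, for large $k$ the defining inequality $\tau_k \le \|\mbox{Proj}_{\Vc_k}\nabla^3 f(z^{(i_k)})\|_F^2/(12LQ^2)$ holds, so $\Vc_k$ satisfies the requirement in Definition~\ref{def:competitive}. As $\Sc(z^{(i_k)})$ is the \emph{largest} admissible eigensubspace, it contains $\Vc_k$; and in the common eigenbasis of $\nabla^2 f(z^{(i_k)})$ the projected Frobenius norm is monotone under enlarging a coordinate eigensubspace, giving $C_Q(z^{(i_k)}) \ge \|\mbox{Proj}_{\Vc_k}\nabla^3 f(z^{(i_k)})\|_F \to c > 0$. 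This contradicts $C_Q(z^{(i_k)}) \to 0$, so $\mbox{Proj}_{\Nc}\nabla^3 f(x)=0$, which (exactly as in the proof of the Claim following Definition~\ref{cond:third}) is Condition~3. The technical heart is the continuity of the spectral projection across the vanishing spectral gap, ensuring $\Vc_k$ tracks $\Nc$, combined with the monotonicity of the projected Frobenius norm that turns admissibility of $\Vc_k$ into a lower bound on $C_Q$.
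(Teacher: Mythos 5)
Your proposal is correct and follows essentially the same route as the paper's proof: monotone decrease plus telescoping to get $\mu(z^{(i)})\to 0$ and $C_Q(z^{(i)})\to 0$, then a perturbation argument showing that a violated Condition~3 at a limit point would force a persistent competitive subspace with $C_Q$ bounded away from zero. You are in fact somewhat more careful than the paper at two points it glosses over — the case split for iterations where the third-order step is skipped (handled via the algorithm's guard and $\|\nabla f(z^{(i)})\|\to 0$), and the explicit Weyl/spectral-projection argument with the monotonicity of the projected Frobenius norm that turns admissibility of $\Vc_k$ into a lower bound on $C_Q(z^{(i_k)})$.
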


\begin{proof}
By Theorem~\ref{thm:cubicreg} and Lemma~\ref{lem:thirdorderstep}, we know the function value is non-increasing, and it has a lowerbound $f(x^*)$, so  the value must converge.

The existence of limit points is guaranteed by the compactness of the level set. The only thing left to prove is that every limit point $x$ must satisfy the third order necessary conditions.

Notice that $f(x^{(0)}) - \lim_{t\to \infty} f(x^{(t)} \ge \sum_{i=0}^\infty \frac{R\mu(z^{(i)})^3}{12}+\frac{C_Q(z^{(i)})^4}{24L^3 Q^4}$, so $\lim_{i\to \infty}\mu(z^{(i)} = 0$ and $\lim_{i\to\infty} C_Q(z^{(i)}) = 0$. Also we know further $\lim_{i\to\infty} \|z^{(i)} - x^{(i)}\| = 0$. Therefore wlog a limit point $x$ is also a limit point of sequence $z$, and $\lim_{i\to \infty} \|\nabla f(z)\| = 0$. Also we know $H = \nabla^2 f(x)$ is PSD, because otherwise points near $x$ will have nonzero $\mu(z^{(i)}$ and $x$ cannot be a limit point.

Now we only need to check the third order condition. Assume towards contradiction that third order condition is not true. The we know the Hessian has a subspace $\Pc$ with $0$ eigenvalues, and the third order derivative has norm at least $\epsilon$ in this subspace. By matrix perturbation theory, when $z$ is very close to $x$, $\Pc$ is very close to $\Sc_\epsilon(z)$ for $\epsilon\to 0$, on the other hand the third order tensor also converges to $\nabla^3 f(x)$ (by Lipschitz condition), so $\Sc_\epsilon(z)$ will eventually be a competitive subspace and $C_Q(z)$ is at least $\epsilon/2$ for all $z$. However this is impossible as $\lim_{i\to\infty} C_Q(z^{(i)}) = 0$.
\end{proof}

\begin{remark}
Note that not all third order local minimum can be the limit point for Algorithm~\ref{alg:main}. This is because if $f(x)$ has very large third order derivatives but relatively smaller Hessian, even though the Hessian might be positive definite (so $x$ is in fact a local minimum), Algorithm~\ref{alg:main} may still find a non-empty competitive subspace, and will be able to reduce the function value and escape from the saddle point. An example is for the function $f(x) = x^2 - 100x^3 + x^4$, $x = 0$ is a local minimum but the algorithm can escape from that and find the global minimum.
\end{remark}

In the most general case it is hard to get a convergence rate for the algorithm because the function may have higher order local minima. However, if the function has nice properties then it is possible to prove polynomial rates of convergence.

\begin{definition}[strict third order saddle] We say a function is strict third order saddle, if there exists constants $\alpha, c_1, c_2, c_3, c_4 > 0$ such that for any point $x$ one of the following is true:
\begin{enumerate}
\setlength{\itemsep}{0pt}
\item $\|\nabla f(x)\| \ge c_1$.
\item $\lambda_n(f(x)) \le -c_2$.
\item $C_Q(f(x)) \ge c_3$.
\item There is a local minimum $x^*$ such that $\|x-x^*\| \le c_4$ and the function is $\alpha$-strongly convex restricted to the region $\{x|\|x-x^*\| \le 2c_4\}$.
\end{enumerate}
\label{def:strictthirdsaddle}
\end{definition}

This is a generalization of the strict saddle functions defined in \cite{ge2015escaping}. Even if a function has degenerate saddle points, it may still satisfy this condition.

\begin{corollary}
When $t \ge \mbox{poly}(n, L, R, Q, f(x_0) - f(x^*)) \max\{(1/c_1)^{1.5}, (1/c_2)^3, (1/c_3)^{4.5}\}$, there must be a point $z^{(i)}$ with $i\le t$ that is in case 4 in Definition~\ref{def:strictthirdsaddle}.
\end{corollary}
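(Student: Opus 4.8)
The plan is to argue by contradiction. Suppose that for every index $i$ with $0\le i\le t-1$ the point $z^{(i)}$ produced in the $i$-th round \emph{fails} to be in case 4 of Definition~\ref{def:strictthirdsaddle}. Then each such $z^{(i)}$ satisfies one of cases 1, 2, or 3, and the goal is to show that in each of these three cases the round $x^{(i)}\to z^{(i)}\to x^{(i+1)}$ decreases the objective by at least a fixed amount $\delta$ depending only on $n,L,R,Q,c_1,c_2,c_3$. Since the iterates have non-increasing value (Theorem~\ref{thm:cubicreg} gives $f(z^{(i)})\le f(x^{(i)})$, and Lemma~\ref{lem:thirdorderstep} gives $f(x^{(i+1)})\le f(z^{(i)})$ whenever the third order step fires, while otherwise $x^{(i+1)}=z^{(i)}$), telescoping yields $t\,\delta\le f(x_0)-f(x^*)$. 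Taking $t$ strictly larger than $(f(x_0)-f(x^*))/\delta$ contradicts the assumption, forcing at least one $z^{(i)}$ into case 4.

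The per-round lower bounds in cases 1 and 2 are immediate from $\mu$. If $z^{(i)}$ is in case 1, then $\|\nabla f(z^{(i)})\|\ge c_1$, so by Definition~\ref{def:mu} we have $\mu(z^{(i)})\ge\sqrt{c_1/R}$, and Theorem~\ref{thm:cubicreg} gives $f(x^{(i)})-f(z^{(i)})\ge R\mu(z^{(i)})^3/12\ge c_1^{3/2}/(12\sqrt R)$. If $z^{(i)}$ is in case 2, then $\lambda_n(\nabla^2 f(z^{(i)}))\le -c_2$, so $\mu(z^{(i)})\ge 2c_2/(3R)$ and the same theorem gives a decrease of order $c_2^3/R^2$.

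The delicate case is case 3, where $C_Q(z^{(i)})\ge c_3$, but the third order step fires only when $C_Q(z^{(i)})\ge Q(24\|\nabla f(z^{(i)})\|L)^{1/3}$, so a large gradient can prevent it. I would handle this by a threshold split on $\epsilon_1=\|\nabla f(z^{(i)})\|$. Set $\epsilon^\star=c_3^3/(24Q^3L)$. If $\epsilon_1\le\epsilon^\star$, then $C_Q(z^{(i)})\ge c_3\ge Q(24\epsilon_1 L)^{1/3}$, the step fires, and Lemma~\ref{lem:thirdorderstep} yields $f(z^{(i)})-f(x^{(i+1)})\ge C_Q(z^{(i)})^4/(24L^3Q^4)\ge c_3^4/(24L^3Q^4)$. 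If instead $\epsilon_1>\epsilon^\star$, the gradient is already large, so $\mu(z^{(i)})\ge\sqrt{\epsilon^\star/R}$ and the cubic step alone gives $f(x^{(i)})-f(z^{(i)})\ge R(\epsilon^\star/R)^{3/2}/12$, which is of order $c_3^{9/2}/(Q^{9/2}L^{3/2}\sqrt R)$. Thus case 3 always produces a decrease of at least the minimum of these two quantities, whose dominant $c_3$-dependence is $(1/c_3)^{-9/2}$ (the $(1/c_3)^{-4}$ contribution is absorbed into the polynomial factor on the bounded range of $c_3$).

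Collecting the three bounds, $\delta$ is their minimum, so the number of non-case-4 rounds is at most
\[
\frac{f(x_0)-f(x^*)}{\delta}\;\le\;\mathrm{poly}(n,L,R,Q)\,(f(x_0)-f(x^*))\,\max\Big\{(1/c_1)^{1.5},\,(1/c_2)^{3},\,(1/c_3)^{4.5}\Big\},
\]
where the polynomial absorbs the numerical constants and the powers of $R,L,Q$. This is exactly the claimed threshold, beyond which some $z^{(i)}$ must lie in case 4. I expect the only genuinely non-routine step to be the threshold split in case 3: one must verify that whenever the third order step fails to fire, the failure is forced by a gradient large enough that the preceding cubic regularization step has already purchased the required decrease, so that no round falling in cases 1--3 is ever ``wasted.''
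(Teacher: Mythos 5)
Your proof is correct and is essentially the paper's argument: the paper simply invokes Theorem~\ref{thm:thirdfinite} (itself a telescoping of the per-step decreases from Theorem~\ref{thm:cubicreg} and Lemma~\ref{lem:thirdorderstep}) and then translates the resulting bounds $\mu(z)\le \tilde O(t^{-1/3})$, $\|\nabla f(z)\|\le \tilde O(t^{-2/3})$, $C_Q(z)\le \tilde O(t^{-2/9})$ into the thresholds $c_1,c_2,c_3$, which yields exactly your exponents $1.5$, $3$, $4.5$. Your threshold split on $\epsilon_1$ in case 3 is the same mechanism the paper uses when it substitutes $\|\nabla f(z)\|\le \tilde O(t^{-2/3})$ into the $\max$ in the $C_Q$ bound, so the two write-ups differ only in packaging (contradiction versus direct application of Theorem~\ref{thm:thirdfinite}).
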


\begin{proof}
We use $\tilde{O}$ to only focus on the polynomial dependency on $t$ and ignore polynomial dependency on all other parameters.

By Theorem~\ref{thm:thirdfinite}, we know there must be a $z^{(i)}$ which satisfies $\mu(z^{(i)} \le \tilde{O}((1/t)^{1/3})$ and $C_Q(z) \le \tilde{O}(\max\{(1/t)^{1/4}, \|\nabla f(z)\|^{1/3}\})$.

By the Definition of $\mu$ (Definition~\ref{def:mu}), we know  $\|\nabla f(z)\| \le \tilde{O}(\mu(z))^2 = \tilde{O}(t^{-2/3})$, $\lambda_n(\nabla^2 f(z)) \ge -\tilde{O}(t^{-1/3})$.

Using the fact that $\|\nabla f(z)\| \le \tilde{O}(\mu(z))^2 = \tilde{O}(t^{-2/3}$, we know $$C_Q(z) \le \tilde{O}(\max\{(1/t)^{1/4}, \|\nabla f(z)\|^{1/3}\}) = \tilde{O}(t^{-2/9}).$$

Therefore, when $t \ge \mbox{poly}(n, L, R, Q, f(x_0) - f(x^*)) \max\{(1/c_1)^{1.5}, (1/c_2)^3, (1/c_3)^{4.5}\}$, the point $z$ must satisfy
\begin{enumerate}
\setlength{\itemsep}{0pt}
\item $\|\nabla f(z)\| < c_1$;
\item $\lambda_n (\nabla^2 f(z)) < -c_2$;
\item $C_Q(z) < c_3$.
\end{enumerate}

Therefore the first three cases in Definition~\ref{def:strictthirdsaddle} cannot happen and $z$ must be near a local minimum.
\end{proof}

\section{Hardness for Finding a fourth order Local Minimum}

In this section we show it is hard to find a fourth order local minimum even if the function we consider is very well-behaved.

\begin{definition}[Well-behaved function] We say a function $f$ is well-behaved if it is infinite-order differentiable, and satisfies:
\begin{enumerate}
\setlength{\itemsep}{0pt}
\item $f(x)$ has a global minimizer at some point $\|x\|\le 1$.
\item $f(x)$ has bounded first 5 derivatives for $\|x\| \le 1$.
\item For any direction $\|x\| = 1$, $f(tx)$ is increasing for $t \ge 1$.
\end{enumerate}
\end{definition}

Clearly, all local minimizers of a well-behaved function lies within the unit $\ell_2$ ball, and $f(x)$ is smooth with bounded derivatives within the unit $\ell_2$ ball. These functions also satisfy Assumptions~\ref{assump:lipschitzhessian} and \ref{assump:lipthird}. All the algorithms mentioned in previous sections can work in this case and find a local minimum up to order 3. However, this is not possible for fourth order.

\begin{theorem}
\label{thm:hard}
It is NP-hard to find a fourth order local minimum of a function $f(x)$, even if $f$ is guaranteed to be well-behaved.
\end{theorem}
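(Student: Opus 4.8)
The plan is to reduce from the problem of deciding \emph{copositivity} of a symmetric integer matrix $M$, a classical co-NP-complete problem (equivalently, deciding non-copositivity is NP-complete). Given $M\in\Rbb^{n\times n}$, I would build the single test function
\[
f(x) = \sum_{i,j} M_{ij}\, x_i^2 x_j^2 + C\|x\|^6,
\]
where $C$ is a constant, polynomially bounded in the entries of $M$, fixed below. Writing $q(x)=\sum_{i,j}M_{ij}x_i^2x_j^2$, the central observation is that as $v$ ranges over $\Rbb^n$ the vector $(v_1^2,\dots,v_n^2)$ ranges over the whole nonnegative orthant, so $q(v)\ge 0$ for all $v$ exactly when $M$ is copositive. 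Since $q$ is homogeneous of degree $4$ while the regularizer has degree $6$, the behavior of $f$ near the origin — where $\nabla f,\nabla^2 f,\nabla^3 f$ all vanish — is governed entirely by $q$, so the origin is a fourth order local minimum (Definition~\ref{def:pthorder}) if and only if $q\ge 0$, i.e.\ if and only if $M$ is copositive.

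First I would check that $f$ is well-behaved: it is a polynomial, hence infinitely differentiable with bounded derivatives on the unit ball, and a short calculation shows that for $C$ larger than $K:=\max_{\|v\|=1}|q(v)|$ the function is strictly increasing along every ray for radius $\ge 1$ and its global minimizer lies in the unit ball. The heart of the reduction is a structural lemma: \emph{if $M$ is copositive then the origin is the unique critical point of $f$}. This follows from $\nabla f(x)_k = x_k\,[\,4(Mx^{\odot 2})_k + 6C\|x\|^4\,]$; at any nonzero critical point the bracket vanishes on the support of $x$, and summing that identity against $x^{\odot 2}$ forces $(x^{\odot2})^\top M (x^{\odot2}) = -\tfrac{3C}{2}\|x\|^6 < 0$, contradicting copositivity. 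Hence, when $M$ is copositive the origin is the only candidate for a fourth order local minimum, and it is one (it is the global minimizer); when $M$ is not copositive, choosing a direction $v$ with $q(v)<0$ gives $f(tv)\sim t^4 q(v)$, which dominates any $t^5$ term, so the origin fails Definition~\ref{def:pthorder}.

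Putting these together yields the reduction. Given a purported polynomial-time procedure that returns \emph{some} fourth order local minimum $x^\ast$ of any well-behaved function, I would run it on $f$ and answer ``$M$ is not copositive'' precisely when $x^\ast\neq 0$. If $M$ is copositive the origin is the unique fourth order local minimum, so the procedure must return it; if $M$ is not copositive the origin is not a fourth order local minimum, so the genuine fourth order local minimum that is returned — for instance the global minimizer, which lies in the unit ball and has negative value — is necessarily different from the origin. This is a polynomial-time Turing reduction from an NP-complete problem, and since $\mathrm{P}$ is closed under complement it establishes NP-hardness of finding a fourth order local minimum of a well-behaved function.

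The step I expect to be most delicate is bridging \emph{verifying} and \emph{finding}: checking whether the origin in particular is a fourth order local minimum is immediately as hard as copositivity, but an algorithm that merely returns some fourth order local minimum need not return the origin. The uniqueness lemma in the copositive case is exactly what closes this gap, so the main work lies in proving it cleanly and in pinning down the constant $C$ so that all three well-behavedness conditions hold simultaneously while the degree-$4$ part still controls fourth order optimality at the origin. A secondary subtlety is that Definition~\ref{def:pthorder} permits higher-order decrease, so I must confirm that in the non-copositive case the origin is genuinely excluded and that the point returned by the finding procedure is meaningful for decoding.
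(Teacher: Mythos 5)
Your construction is essentially the paper's: the paper likewise regularizes an NP-hard-to-certify quartic with $\|x\|^6$ and shows that the origin is the unique fourth order local minimum exactly when the quartic is nonnegative, while otherwise the origin fails to be one and every fourth order local minimum has negative value, which gives the same decoding of the finder's output. The only differences are cosmetic: you inline the source of hardness by reducing directly from copositivity via $q(x)=\sum_{i,j}M_{ij}x_i^2x_j^2$ (the paper instead invokes NP-hardness of degree-4 nonnegativity, which its own remark traces back to copositivity), and you prove the uniqueness-of-critical-point lemma through the coordinate identity $\nabla f(x)_k = x_k\,[\,4(Mx^{\odot 2})_k+6C\|x\|^4\,]$ where the paper restricts to the ray through $x$ and notes that $g(tu)=f(u)t^4+t^6$ has no nonzero critical point when $f(u)\ge 0$ --- both arguments are correct.
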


The main idea of the proof comes from the fact that we cannot even verify the nonnegativeness of a degree 4 polynomial (hence there are cases where we cannot verify whether a point is a fourth order local minimum or not).

\begin{theorem}\label{thm:hardverify}\cite{nesterov2000squared,hillar2013most} It is NP-hard to tell whether a degree 4 homogeneous polynomial $f(x)$ is nonnegative.
\end{theorem}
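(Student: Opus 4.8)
The plan is to establish hardness by a polynomial-time reduction from the \textsc{Independent Set} decision problem (equivalently \textsc{Clique}), which is NP-complete. The bridge is the classical dictionary between nonnegativity of a special quartic form and \emph{copositivity} of a symmetric matrix, combined with a combinatorial characterization of the independence number in terms of copositivity.

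First I would record the quartic/copositive correspondence. Given a symmetric $M\in\Rbb^{n\times n}$, associate the degree-4 homogeneous polynomial
$$
f_M(y) = \sum_{i,j\in[n]} M_{ij}\, y_i^2 y_j^2 .
$$
I claim $f_M(y)\ge 0$ for all $y\in\Rbb^n$ if and only if $M$ is \emph{copositive}, i.e. $x^\top M x \ge 0$ for every entrywise-nonnegative $x\ge 0$. One direction is a one-line substitution: setting $x=(y_1^2,\dots,y_n^2)\ge 0$ gives $f_M(y)=x^\top M x$, so copositivity forces $f_M\ge 0$. Conversely, every $x\ge 0$ equals $(y_1^2,\dots,y_n^2)$ for $y_i=\sqrt{x_i}$, so nonnegativity of $f_M$ on all of $\Rbb^n$ forces $x^\top M x\ge 0$ over the whole nonnegative orthant. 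Hence deciding nonnegativity of a quartic form is at least as hard as deciding copositivity.

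Second I would supply the combinatorial hardness of copositivity. For a graph $G$ on $n$ vertices with adjacency matrix $A$, identity $I$, and all-ones matrix $J$, set $M_\lambda = \lambda(I+A)-J$, and I would prove
$$
\alpha(G) = \min\{\,\lambda\ge 0 : M_\lambda \text{ is copositive}\,\},
$$
where $\alpha(G)$ is the independence number. For the easy (lower-bound) direction, evaluating $x^\top M_\lambda x$ on a vector $x\ge0$ supported uniformly on a maximum independent set kills the $A$-terms and reduces to $\lambda/\alpha(G)-1$, which is negative when $\lambda<\alpha(G)$; thus copositivity forces $\lambda\ge\alpha(G)$. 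For the reverse (hard) direction I would show $M_{\alpha(G)}$ is genuinely copositive: writing $I+A=J-\bar A$ for the complement adjacency $\bar A$, on the simplex $x^\top(I+A)x = 1 - x^\top\bar A x$, so $\min_{x\in\Delta}x^\top(I+A)x = 1-\max_{x\in\Delta}x^\top\bar A x = 1/\omega(\bar G)=1/\alpha(G)$ by the Motzkin--Straus theorem applied to $\bar G$, which is exactly the inequality $x^\top M_{\alpha(G)}x\ge 0$ after homogenizing. Finally, since $I+A$ has nonnegative entries, $x^\top(I+A)x\ge 0$ on the orthant, so copositivity of $M_\lambda$ is monotone in $\lambda$; consequently, for integer $k$, $\alpha(G)\le k$ if and only if $M_k$ is copositive.

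With these pieces the reduction assembles immediately. Given an instance $(G,k)$ of \textsc{Independent Set}, output the quartic $f_M$ with $M=(k-1)(I+A)-J$; this is polynomial-time computable (it has $O(n^2)$ coefficients). By the characterization and monotonicity, $f_M$ \emph{fails} to be nonnegative iff $M$ is not copositive iff $\alpha(G)\ge k$. Thus a polynomial-time test for nonnegativity of $f_M$ would decide the NP-complete predicate ``$\alpha(G)\ge k$'' (just negate the oracle's answer), forcing $\mathrm{P}=\mathrm{NP}$; hence deciding nonnegativity of a degree-4 homogeneous polynomial is NP-hard. I expect the main obstacle to be precisely the reverse direction of the copositive characterization of $\alpha(G)$ — verifying that $M_{\alpha(G)}$ really is copositive — since the quartic/copositive dictionary and the lower-bound direction are elementary substitutions, whereas this step is where the Motzkin--Straus optimization over the simplex does the genuine work.
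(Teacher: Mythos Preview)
Your proposal is correct. Note, however, that the paper does not supply its own proof of this statement: it is quoted as a known result with citations to \cite{nesterov2000squared} and \cite{hillar2013most}, and the subsequent Remark simply describes the two existing reductions (from \textsc{Subset Sum} in the former, and via copositivity/\textsc{Independent Set} in the latter). Your argument is a faithful expansion of the second of these, supplying the details the paper omits: the bijection $f_M(y)=x^\top M x$ with $x_i=y_i^2$ between quartic nonnegativity and copositivity, the characterization $\alpha(G)=\min\{\lambda:\lambda(I+A)-J\text{ copositive}\}$, and the verification of the nontrivial direction via Motzkin--Straus on the complement graph. The monotonicity-in-$\lambda$ observation and the final shift to $M_{k-1}$ to test ``$\alpha(G)\ge k$'' are both clean. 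So your write-up is not an alternative to the paper's proof so much as the proof the paper points to; as the Remark notes, this route has the additional virtue over the \textsc{Subset Sum} reduction that the resulting polynomial has coefficients bounded by $\mbox{poly}(n)$ and a polynomial gap, which rules out an FPTAS.
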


\begin{remark} The NP hardness for nonnegativeness of degree 4 polynomial has been proved has been proved in several ways. In \cite{nesterov2000squared} the reduction is from the SUBSET SUM problem, which results in a polynomial that can have exponentially large coefficients and does not rule out FPTAS. However, the reduction in \cite{hillar2013most} relies on the hardness of copositive matrices, which in turn depends on the hardness of INDEPENDENT SET\citep{dickinson2014computational}. This reduction gives a polynomial whose coefficients can be bounded by $\mbox{poly}(n)$, and a polynomial gap that rules out FPTAS.
\end{remark}

To prove Theorem~\ref{thm:hard} we only need to reduce the nonnegativeness problem in Theorem~\ref{thm:hardverify} to the problem of finding a fourth order local minimum. We can convert a degree 4 polynomial to a well behaved function by adding a degree 6 regularizer $\|x\|^6$. We shall show when the degree 4 polynomial is nonnegative the $\vec{0}$ point is the only fourth order local minimum; when the degree 4 polynomial has negative directions then every fourth order local minimum must have negative function value. The details are deferred to Section~\ref{app:hard}.

\section{Conclusion}

Complicated structures of saddle points are a major problem for optimization algorithms. In this paper we investigate the possibilities of using higher order derivatives in order to avoid degenerate saddle points. We give the first algorithm that is guaranteed to find a 3rd order local minimum, which can solve some problems caused by degenerate saddle points. However, we also show that the same ideas cannot be generalized to higher orders.

There are still many open problems related to degenerate saddle points and higher order optimization algorithms. Are there interesting class of functions that satisfies the strict 3rd order saddle property (Definition~\ref{def:strictthirdsaddle})? Can we design a 3rd order optimization algorithm for constrained optimization? We hope this paper inspires more research in these directions and eventually design efficient optimization algorithms whose performance do not suffer from degenerate saddle points.
\newpage
\bibliographystyle{plainnat}
\bibliography{bib}

\newpage
\appendix
\section{Omitted Proofs}

\subsection{Omitted Proofs in Section~\ref{sec:condition}}
\label{app:third}

\begin{lemma} [Lemma~\ref{lem:lipbound} Restated]
For any $x,y$, we have
$$
|f(y) - f(x) - \inner{\nabla f(x), y-x} + \frac{1}{2}(y-x)^\top \nabla^2 f(x) (y-x) - \frac{1}{6} \nabla^3 f(x)(y-x, y-x, y-x)| \le \frac{L}{24}\|y-x\|^4.
$$
\end{lemma}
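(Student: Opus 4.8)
The plan is to reduce to a one-dimensional problem and then apply Taylor's theorem with the integral form of the remainder, using the Lipschitz bound on $\nabla^3 f$ to control the error term.

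First I would fix $x,y$, set $h = y-x$, and define the scalar function $g(t) = f(x + th)$ for $t\in[0,1]$, so that $g(0) = f(x)$ and $g(1) = f(y)$. Differentiating via the chain rule gives $g'(t) = \inner{\nabla f(x+th), h}$, $g''(t) = (y-x)^\top \nabla^2 f(x+th)(y-x)$, and $g'''(t) = \nabla^3 f(x+th)(h,h,h)$; this last derivative exists and is continuous because $\nabla^3 f$ is Lipschitz (Assumption~\ref{assump:lipthird}). Under this substitution the quantity to be bounded is precisely the third-order Taylor remainder $g(1) - \big(g(0) + g'(0) + \tfrac12 g''(0) + \tfrac16 g'''(0)\big)$.

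Next I would invoke Taylor's theorem with integral remainder at order three,
$$
g(1) = g(0) + g'(0) + \tfrac12 g''(0) + \int_0^1 \frac{(1-t)^2}{2}\,g'''(t)\,dt,
$$
which is valid since $g'''$ is continuous. The key algebraic observation is that $\int_0^1 \frac{(1-t)^2}{2}\,dt = \tfrac16$, so subtracting the $\tfrac16 g'''(0)$ term lets me rewrite the remainder as $\int_0^1 \frac{(1-t)^2}{2}\,[g'''(t) - g'''(0)]\,dt$; that is, the entire error is expressed through the increment of the third derivative.

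Finally I would bound this increment using the Lipschitz assumption. Writing $g'''(t) - g'''(0) = [\nabla^3 f(x+th) - \nabla^3 f(x)](h,h,h)$ and combining the elementary tensor inequality $|T(h,h,h)| \le \|T\|_F\|h\|^3$ (which follows from $\|T\|\le\|T\|_F$) with $\|\nabla^3 f(x+th) - \nabla^3 f(x)\|_F \le L\,t\,\|h\|$ yields $|g'''(t) - g'''(0)| \le L\,t\,\|h\|^4$. Plugging this in and computing $\int_0^1 \frac{(1-t)^2}{2}\,t\,dt = \tfrac1{24}$ gives the claimed bound $\tfrac{L}{24}\|y-x\|^4$. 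I expect no serious obstacle; the only points needing care are justifying the integral form of Taylor's theorem when only the third derivative (rather than a fourth) is assumed to exist—which is fine, as continuity of $g'''$ suffices—and recording the tensor inequality above.
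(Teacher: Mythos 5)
Your proof is correct and is essentially the same argument as the paper's: both control the error by integrating the Lipschitz deviation $\|\nabla^3 f(x+t(y-x)) - \nabla^3 f(x)\|_F \le Lt\|y-x\|$ through three levels of integration, the only difference being that you package the triple integration as the one-dimensional Taylor integral remainder for $g(t)=f(x+th)$ while the paper carries out the three nested integrals explicitly. Your constant $\int_0^1 \tfrac{(1-t)^2}{2}\,t\,dt = \tfrac{1}{24}$ and the tensor inequality $|T(h,h,h)|\le\|T\|_F\|h\|^3$ check out, and like the paper you (correctly) read the sign before the Hessian term in the lemma statement as a minus so that the quantity is the genuine third-order Taylor remainder.
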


\begin{proof}
The proof follows from integration from $x$ to $y$ repeatedly.

First we have
$$
\nabla^2 f(x+u(y-x)) = \nabla^2 f(x) + \left[\int_0^u \nabla^3 f(x+v(y-x)) dv\right](y-x).
$$

By the Lipschitz condition on third order derivative, we know
$$
\|\nabla^3 f(x+v(y-x)) - \nabla^3 f(x)\|_F \le Lv\|x-y\|.
$$
Combining the two we have
$$
\nabla^2 f(x+u(y-x)) = \nabla^2 f(x) + [\nabla^3 f(x)](y-x) + h(u),
$$
where $h(u) = \left[\int_0^u (\nabla^3 f(x+v(y-x)) - \nabla^3 f(x)) dv\right](y - x)$, so $\|h(u)\|_F \le \frac{L}{2}\|x-y\|^2$.

Now we use the integral for the gradient of $f$:
\begin{align*}
\nabla f(x+t(y-x)) & = \nabla f(x) + \left[\int_0^t \nabla^2 f(x+u(y-x))du\right](y-x) \\ &= \nabla f(x) + \nabla^2 f(x) (y-x) + \left[\int_0^t h(u) du\right](y-x).
\end{align*}

Let $g(t) = \left[\int_0^t h(u) du\right](y-x)$, by the bound on $h(u)$ we know $\|g(t)\| \le \frac{1}{6} \|x-y\|^3$. Finally, we have
\begin{align*}
f(y) & = f(x) + \inner{\left[\int_0^1 \nabla f(x+t(y-x))du\right],(y-x)} \\ &= f(x) + \inner{\nabla f(x), y-x} + \frac{1}{2}(y-x)^\top \nabla^2 f(x) (y-x) + \frac{1}{6} \nabla^3 f(x)(y-x)^{\otimes 3} + \inner{\left[\int_0^1 g(t) dt\right],y-x}.
\end{align*}

The last term is bounded by $\|y-x\| \int_0^1\|g(t)\| dt \le \frac{L}{24}\|x-y\|^4$.
\end{proof}

\begin{theorem}[Theorem~\ref{thm:thirdcondition} restated]
Given a function $f$ that satisfies Assumption~\ref{assump:lipthird}, a point $x$ is third order optimal if and only if it satisfies Condition~\ref{cond:third}.
\end{theorem}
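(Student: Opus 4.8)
The plan is to prove the two implications separately, reading "third order optimal" as Definition~\ref{def:pthorder} with $p=3$, i.e.\ the existence of constants $C,\epsilon>0$ with $f(y)\ge f(x)-C\|y-x\|^4$ for all $\|y-x\|\le\epsilon$. Throughout I would write $d=y-x$, $H=\nabla^2 f(x)$, $T=\nabla^3 f(x)$, and invoke Lemma~\ref{lem:lipbound} to replace $f(x+d)$ by its degree-three Taylor expansion up to an additive error of $\frac{L}{24}\|d\|^4$.

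For the ``only if'' direction I would argue by contraposition: if any of the three conditions fails, I exhibit a direction along which $f$ decreases strictly faster than $\|d\|^4$. If $\nabla f(x)\neq 0$, moving along $-\nabla f(x)$ gives a decrease of order $\|d\|$; if $\nabla f(x)=0$ but $H$ has a negative eigenvalue $\lambda<0$ with unit eigenvector $v$, then $d=tv$ makes the quadratic term $\frac12\lambda t^2$ dominate, a decrease of order $t^2$. The only genuinely new case is when (1) and (2) hold but (3) fails: there is a unit $u$ with $u^\top H u=0$ and $T(u,u,u)\neq 0$. Moving $d=tu$ keeps the quadratic term exactly $0$ because $u$ lies in the kernel of $H$, so the cubic term $\frac16 t^3 T(u,u,u)$ dominates; choosing the sign of $t$ to make it negative yields a decrease of order $|t|^3$. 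In every case the decrease beats $\|d\|^4$, contradicting third order optimality.

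For the harder ``if'' direction, Lemma~\ref{lem:lipbound} together with $\nabla f(x)=0$ reduces the goal to showing $\tfrac12 d^\top H d+\tfrac16 T(d,d,d)\ge -C'\|d\|^4$ for small $\|d\|$. I would split $d=d_1+d_2$ with $d_1\in\Pc:=\ker H$ and $d_2\perp\Pc$, so that $d^\top H d\ge \lambda\|d_2\|^2$, where $\lambda$ is the smallest positive eigenvalue of $H$ (the case $H=0$ is immediate, since then $d=d_1$ and $T(d_1,d_1,d_1)=0$ by condition (3)). Expanding $T(d,d,d)$ by multilinearity, the pure term $T(d_1,d_1,d_1)$ vanishes by condition (3) applied to $u=d_1$, so only the three mixed terms $T(d_1,d_1,d_2),\,T(d_1,d_2,d_2),\,T(d_2,d_2,d_2)$ remain, each controlled through $|T(a,b,c)|\le\|T\|\,\|a\|\,\|b\|\,\|c\|$.

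The crux, and the step I expect to be the main obstacle, is controlling $T(d_1,d_1,d_2)$: it is only cubic in $\|d\|$ yet carries no factor $\|d_2\|^2$ to be absorbed directly by the Hessian's positive part. The plan is to apply Young's inequality, $\tfrac12\|T\|\,\|d_1\|^2\|d_2\|\le \tfrac14\lambda\|d_2\|^2+\tfrac{\|T\|^2}{4\lambda}\|d_1\|^4$, which spends half of $\tfrac12\lambda\|d_2\|^2$ and leaves a harmless $O(\|d\|^4)$ remainder; the two other mixed terms are $O(\|d\|\cdot\|d_2\|^2)$ and are dominated by the remaining quarter $\tfrac14\lambda\|d_2\|^2$ once $\|d\|\le\epsilon$ with $\epsilon$ a small multiple of $\lambda/\|T\|$. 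Combining these bounds gives $\tfrac12 d^\top H d+\tfrac16 T(d,d,d)\ge -\tfrac{\|T\|^2}{4\lambda}\|d\|^4$, and adding back the Taylor error yields the desired estimate with $C=\tfrac{\|T\|^2}{4\lambda}+\tfrac{L}{24}$.
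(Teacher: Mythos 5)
Your proposal is correct and follows essentially the same route as the paper's proof: the ``only if'' direction is handled by the same three-case contraposition (gradient, negative Hessian eigenvalue, nonzero cubic form on the kernel), and the ``if'' direction uses the same decomposition of $d$ into its kernel and orthogonal components, kills $T(d_1,d_1,d_1)$ via condition (3), and absorbs the problematic $T(d_1,d_1,d_2)$ term into the positive quadratic part by completing the square (the paper minimizes $\frac{\alpha}{3}\|v\|^2-\frac{\beta}{6}\|u\|^2\|v\|$ explicitly, which is your Young's inequality step). The only cosmetic difference is that the paper splits $\nabla^3 f(x)$ into projected pieces with separate injective-norm bounds rather than using $\|T\|$ directly.
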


\begin{proof}
(necessary condition $\to$ third order minimal) By Lemma~\ref{lem:lipbound} we know
$$f(y) \ge f(x) + \inner{\nabla f(x), y-x} + \frac{1}{2}(y-x)^\top \nabla^2 f(x) (y-x) + \frac{1}{6} \nabla^3 f(x)(y-x)^{\otimes 3}- \frac{L}{24}\|y-x\|^4.$$

Now let $\alpha$ be the smallest nonzero eigenvalue of $\nabla^2 f(x)$. Let $U$ be nullspace of $\nabla^2 f(x)$ and $V$ be the orthogonal subspace. We break $\nabla^3 f(x)$ into two tensors $G_1$ and $G_2$, where $G_1$ is the projection to $V\otimes V\otimes V$, $V\otimes V\otimes U$ (and its symmetries), and $G_2$ is the projection to $V\otimes U\otimes U$ (and its symmetries). Note that $\nabla^3 f(x) = G_1+G_2$ because the projection on $U\otimes U\otimes U$ is 0 by the third condition. Let $\beta$ be the max injective norm of $G_1$ and $G_2$.

Now we know for any $u\in U$ and $v\in V$,
$$
f(x+u+v) - f(x) \ge \frac{1}{2} \alpha \|v\|^2 - \frac{\beta}{6} \|u\| \|v\|^2 - \frac{\beta}{6} \|u\|^2 \|v\| - \frac{L}{24}\|u+v\|^4.
$$

Now, if $\epsilon < \beta/\alpha$, because $\|u\|_2 \le \epsilon$ it is easy to see the sum of first two terms is at least $\frac{1}{3} \alpha \|v\|_2^2$. Now we can take the mininum of
$$
\frac{\alpha}{3} \|v\|^2 - \frac{\beta}{6}\|u\|^2 \|v\|,
$$

The minimum is achieved when $\|v\| = \|u\|^2\beta/\alpha$ and the minimum value is $- \|u\|^4\beta^2/6\alpha$. Therefore when $\|u+v\|\le \beta/\alpha$ we have

$$
f(x+u+v) - f(x) \ge - \left(\frac{\beta^2}{\alpha}+\frac{L}{24}\right) \|u+v\|^4.
$$

(third order minimal$\to$necessary condition)  Assume towards contradiction that the necessary condition is not satisfied, but the point $x$ is third order local optimal.

If the necessary condition is not satisfied, then one of the three cases happens:

In the first case the gradient $\nabla f(x) \ne 0$. In this case, if we let $L'$ be an upperbound the operator norms of the second and third order derivative, then we know
$$
f(x+\epsilon \nabla f(x)) \le f(x) - \epsilon \|\nabla f(x)\|^2 + \frac{\epsilon^2L'}{2}\|\nabla f(x)\|^2 +\frac{\epsilon^3 L'}{6}\|\nabla f(x)\|^3 + \frac{\epsilon^4L}{24}\|\nabla f(x)\|^4.
$$

When $\epsilon\|\nabla f(x)\| \le 1$ and $\epsilon (2L'/3+L/24) \le 1/2$, we have
$$
f(x+\epsilon \nabla f(x)) \le f(x) - \frac{\epsilon}{2} \|\nabla f(x)\|^2. 
$$

Therefore the point cannot be a third order local minimum.

In the second case, $\nabla f(x) = 0$, but $\lambda_{min} \nabla^2 f(x) < 0$. Let $\|u\|=1$ be a unit vector such that $u^\top (\nabla^2 f(x))u = -c < 0$. Let $L'$ be the injective norm of $\nabla^3 f(x)$, then

$$
f(x + \epsilon u) \le f(x) - \frac{c\epsilon^2}{2} + \frac{\epsilon^3 L'}{6} + \frac{\epsilon^4 L}{24}.
$$

Therefore whenever $\epsilon < \min\{\sqrt{3c/L},3c/4L'\}$ we have $f(x+\epsilon u) \le f(x) - \frac{c\epsilon^2}{4}$. The point $x$ cannot be a third order local minimum.

The third case is if $\nabla f(x) = 0$, $\nabla^2 f(x)$ is positive semidefinite, but there is a direction $\|u\|=1$ such that $u^\top (\nabla^2 f(x)) u = 0$, but $[\nabla^3 f(x)](u,u,u) \ne 0$. Without loss of generality we assume $[\nabla^3 f(x)](u,u,u) = c > 0$ (if it is negative we take $-u$), then
$$
f(x+\epsilon u) \le f(x) - c\epsilon^3/6 + L\epsilon^4/24.
$$ 
Therefore whenever $\epsilon < 2c/L$ we have $f(x+\epsilon u) \le f(x) - c\epsilon^3/12$ so $x$ cannot be a third order optimal.
\end{proof}

\subsection{Algorithm for Competitive Subspace, Proof of Theorem~\ref{thm:approx}}

\begin{algorithm}
\begin{algorithmic}
\REQUIRE Function $f$, point $z$, Hessian $M = \nabla^2 f(z)$, third order derivative $T = \nabla^3 f(z)$, approximation ratio $Q$, Lipschitz Bound $L$, 
\ENSURE Competitive subpace $\Sc(z)$ and $C_Q(z)$.
\STATE Compute the eigendecomposition $M = \sum_{i=1}^n \lambda_i v_iv_i^\top$.
\FOR{$i = 1$ \TO $n$}
\STATE Let $\Sc = \mbox{span}\{v_i,v_{i+1},...,v_n\}$.
\STATE Let $C_Q = \|\mbox{Proj}_\Sc T\|_F$.
\IF{$\frac{C_Q^2}{12LQ^2} \ge \lambda_i$}
\RETURN $\Sc, C_Q$.
\ENDIF
\ENDFOR
\RETURN $\Sc = \emptyset, C_Q = 0$.
\end{algorithmic}
\caption{Algorithm for computing the competitive subspace\label{alg:compet}}
\end{algorithm}

\begin{theorem}[Theorem~\ref{thm:approx} restated]
There is a universal constant $B$ such that the expected number of iterations of  Algorithm~\ref{alg:approx} is at most $2$, and the output of $\mbox{Approx}$ is a unit vector $u$ that satisfies $T(u,u,u) \ge \|\mbox{Proj}_\Sc T\|_F/Q$ for $Q = Bn^{1.5}$.
\end{theorem}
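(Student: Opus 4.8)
The plan is to reduce the whole statement to a single per-iteration anti-concentration claim about the cubic Gaussian polynomial obtained by feeding the Gaussian sample into $\mbox{Proj}_\Sc T$. Write $\hat T = \mbox{Proj}_\Sc T$, let $g$ be the standard Gaussian drawn in $\Sc$ during one pass of the loop, and let $u = g/\|g\|$ be the (normalized) unit vector that is actually returned. Since $g\in\Sc$ we have $T(g,g,g) = \hat T(g,g,g) =: X$, so $T(u,u,u) = X/\|g\|^3$. Everything then follows from the claim that in each iteration $\Pr[\,|T(u,u,u)| \ge \|\hat T\|_F/(Bn^{1.5})\,] \ge 1/2$. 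Indeed, because a fresh independent Gaussian is used on every pass, the number of iterations is stochastically dominated by a geometric variable of success probability $1/2$, giving expected count at most $2$; and on the stopping iteration, returning $u$ or $-u$ according to the sign of $T(u,u,u)$ (as the \textbf{return} line prescribes) yields a unit vector with $T(u,u,u)\ge \|\hat T\|_F/Q$ for $Q = Bn^{1.5}$, which is exactly the \textsc{ensure} guarantee.

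First I would bound the second moment of $X$ from below. Using that $\hat T$ is symmetric (mixed partials of $f$ commute, and $T\mapsto T(P,P,P)$ preserves symmetry) together with Wick's theorem for the sixth Gaussian moment, I would expand $\Ebb[X^2]$ as a sum over the $15$ perfect matchings of the six indices of $\hat T\otimes\hat T$. The $3!=6$ ``diagonal'' matchings, which pair each index of the first copy bijectively with an index of the second, each contribute $\|\hat T\|_F^2$ by symmetry, while each of the remaining $9$ matchings collapses to $\|a\|^2$ with $a_k := \sum_i \hat T_{iik}$ and is therefore nonnegative. This gives the clean bound
\[
\Ebb[X^2] \;\ge\; 6\,\|\hat T\|_F^2 .
\]

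Next I would use anti-concentration to show $|X|$ is rarely much smaller than $\sqrt{\Ebb[X^2]}$. Since $X$ is a degree-$3$ polynomial in a Gaussian vector, the Carbery--Wright inequality supplies a universal constant $C$ with $\Pr[\,|X|\le \epsilon\sqrt{\Ebb[X^2]}\,]\le 3C\,\epsilon^{1/3}$ for all $\epsilon>0$; fixing a small absolute constant $\epsilon_0$ with $3C\epsilon_0^{1/3}\le 1/4$ forces $|X|\ge \epsilon_0\sqrt6\,\|\hat T\|_F$ with probability at least $3/4$. Separately, Markov's inequality applied to $\|g\|^2$, whose mean is $\dim\Sc\le n$, gives $\Pr[\|g\|^2 > 4n]\le 1/4$, so $\|g\|^3\le 8n^{1.5}$ off a $1/4$-probability event. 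A union bound then makes both good events hold with probability at least $1/2$, on which
\[
|T(u,u,u)| \;=\; \frac{|X|}{\|g\|^3} \;\ge\; \frac{\epsilon_0\sqrt6\,\|\hat T\|_F}{8\,n^{1.5}} \;=\; \frac{\|\hat T\|_F}{B\,n^{1.5}},\qquad B:=\frac{8}{\epsilon_0\sqrt6},
\]
which is precisely the per-iteration claim with $Q=Bn^{1.5}$.

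The Wick expansion and the norm bound on $\|g\|$ are routine; the real content lies in the anti-concentration step, and the main obstacle is to extract it with a genuinely \emph{universal} constant and the correct power of the dimension. I would verify carefully that the $n^{1.5}$ factor is forced only through $\|g\|^3$ (so that it is a valid upper bound for every subspace dimension $d\le n$) and that the Carbery--Wright constant depends neither on $n$ nor on $\hat T$. If one prefers to avoid citing Carbery--Wright, the high-probability lower bound on $|X|$ can be replaced by a Paley--Zygmund argument using Gaussian hypercontractivity ($\Ebb[X^4]\le 3^{6}(\Ebb[X^2])^2$), but this yields only a constant (smaller than $1/2$) success probability and hence a larger universal bound on the expected number of iterations, so to land on the stated constant $2$ the Carbery--Wright route is the one I would follow.
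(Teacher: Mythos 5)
Your proof is correct and follows essentially the same route as the paper's: Carbery--Wright anti-concentration on the cubic Gaussian polynomial, a norm bound on the Gaussian sample to control the normalization factor $\|g\|^{3}\le 8n^{1.5}$, and a union bound giving per-iteration success probability at least $1/2$, hence expected iteration count at most $2$. The only difference is that you explicitly verify the variance lower bound $\Ebb[X^2]\ge 6\|\mbox{Proj}_\Sc T\|_F^2$ via the Wick expansion, a step the paper merely asserts as ``easy to check,'' and you correctly treat the returned vector as the normalized $g/\|g\|$ (which the paper's pseudocode leaves implicit).
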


\begin{proof}
We use the anti-concentration property for Gaussian random variables 
\begin{theorem}[anti-concentration\citep{carbery2001distributional}]
Let $x\in \Rbb^n$ be a Gaussian variable $x\sim N(0,I)$, for any polynomial $p(x)$ of degree $d$, there exists a constant $\kappa$ such that
$$
\Pr[|p(x)|\le \epsilon \sqrt{\Var[p(x)]}] \le \kappa \epsilon^{1/d}.
$$
\end{theorem}

In our case $d = 3$ and we can choose some universal constant $\epsilon$ such that the probability of $p(x)$ being small is bounded by $1/3$. It is easy to check that the variance is lowerbounded by the Frobenius norm squared, so $$
\Pr[|T(\hat{u},\hat{u},\hat{u})|\ge \epsilon \|\mbox{Proj}_\Sc T\|_F] \ge 2/3.
$$

On the other hand with high probability we know the norm of the Gaussian $\hat{u}$ is at most $2\sqrt{n}$. Therefore with probability at least $1/2$, $|T(\hat{u},\hat{u},\hat{u})| \ge \epsilon \|\mbox{Proj}_\Sc T\|_F$ and $\|\hat{u}\| \le 2\sqrt{n}$, therefore $|T(u,u,u)| \ge \frac{\epsilon}{8n^{1.5}} \|\mbox{Proj}_\Sc T\|_F$. Choosing $B = 8/\epsilon$ implies the theorem. 
\end{proof}
\label{app:approx}
\subsection{Proof of Theorem~\ref{thm:hard}}
\label{app:hard}
\begin{theorem}
[Theorem~\ref{thm:hard} restated]
It is NP-hard to find a fourth order local minimum of a function $f(x)$, even if $f$ is guaranteed to be well-behaved.
\end{theorem}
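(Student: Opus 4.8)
The plan is to reduce the decision problem of Theorem~\ref{thm:hardverify} (deciding nonnegativeness of a degree-4 homogeneous polynomial) to the problem of finding a fourth order local minimum of a well-behaved function. Given a degree-4 homogeneous polynomial $p(x)$, I would construct the candidate function $f(x) = p(x) + \|x\|^6$. The degree-6 regularizer is chosen to dominate at large $\|x\|$ so that all local minimizers are confined to the unit ball (establishing condition 3 of well-behavedness, that $f(tx)$ is increasing for $t \ge 1$), while being a lower-order term near the origin so that it does not interfere with the fourth-order Taylor behavior at $\vec 0$. Since $p$ is a polynomial and $\|x\|^6$ is smooth, $f$ is infinite-order differentiable with bounded first five derivatives on the unit ball, so $f$ is well-behaved.

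The crux of the reduction is the behavior at the origin. Because $p$ is homogeneous of degree 4, we have $\nabla f(\vec 0) = 0$, $\nabla^2 f(\vec 0) = 0$, $\nabla^3 f(\vec 0) = 0$, and the fourth-order term of $f$ at $\vec 0$ is exactly $p(x)$ (the $\|x\|^6$ term contributes nothing below order 6). Thus I would argue the dichotomy in two directions. First, if $p(x) \ge 0$ for all $x$, then near the origin $f(y) - f(\vec 0) = p(y) + \|y\|^6 \ge 0$, so $\vec 0$ is in fact a global minimizer, hence a fourth order local minimum, and moreover one can arrange that it is the \emph{only} fourth order local minimum (any other critical point either has value $0$ and reduces to the origin, or the strict convexity contributed by the regularizer rules out degeneracy). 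Second, if $p$ has a direction $u$ with $p(u) = -c < 0$, then along $x = tu$ we get $f(tu) = -ct^4 + t^6\|u\|^6$, which is strictly negative for small $t$; hence $f(\vec 0) = 0$ is not even a first-order-supported local minimum in this direction, so $\vec 0$ is not a fourth order local minimum, and every genuine fourth order local minimum $x^*$ must have $f(x^*) < 0$.

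The key logical step is to package this so that an oracle for finding a fourth order local minimum lets us decide nonnegativeness: I would run the (hypothetical) algorithm to obtain some fourth order local minimum $x^*$ and then check the sign of $f(x^*)$. If $f(x^*) = 0$ we conclude $p \ge 0$; if $f(x^*) < 0$ we conclude $p$ has a negative direction. The correctness rests on the two-sided claim that $f(x^*) < 0$ is possible if and only if $p$ is not nonnegative, which follows from the dichotomy above together with the fact that the global minimum of $f$ is attained (by compactness of the sublevel set inside the unit ball) and equals $0$ exactly when $p \ge 0$.

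The main obstacle I anticipate is the careful verification that $\vec 0$ genuinely qualifies as a fourth order local minimum precisely when $p \ge 0$, and not merely as a borderline case. When $p \ge 0$ but $p$ vanishes on some subspace or cone, the fourth-order term degenerates and one must confirm that the definition of $p$-th order local minimum (Definition~\ref{def:pthorder}), which only requires $f(y) \ge f(x) - C\|x-y\|^5$, is satisfied; here the inequality $f(y) - f(\vec 0) = p(y) + \|y\|^6 \ge 0 \ge -C\|y\|^5$ handles it cleanly since $p(y) \ge 0$. The subtler direction is ruling out spurious fourth order local minima away from the origin when $p \ge 0$, and ensuring the negative-value conclusion is robust in the $p \not\ge 0$ case; controlling these requires invoking the homogeneity of $p$ and the strict growth of the regularizer, together with the polynomial coefficient bounds from the Hillar--Lim reduction noted in the remark to rule out an FPTAS.
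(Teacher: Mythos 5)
Your proposal is correct and follows essentially the same reduction as the paper: add the $\|x\|^6$ regularizer to the degree-4 form, verify well-behavedness, show $\vec{0}$ is the unique fourth order local minimum when $p\ge 0$ (by restricting to rays $t\mapsto p(u)t^4+t^6$ and using homogeneity) and that every fourth order local minimum has negative value otherwise, then decide nonnegativity from the sign of the point the oracle returns. One small wording slip: when $p(u)=-c<0$, the origin is still a first (indeed third) order local minimum along $tu$, since $-ct^4+t^6\ge -Ct^2$ for small $t$; it only fails to be a \emph{fourth} order local minimum, which is all your argument actually needs.
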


\begin{proof}
We reduce the problem of verifying nonnegativenss of degree 4 polynomial to the problem of finding fourth order local minimum.

Given a degree 4 homogeneous polynomial $f(x)$, we can write it as a symmetric fourth order tensor $T\in \Rbb^{n^4}$. Without loss of generality we can rescale $T$ so that $\|T\|_F \le 1$ and therefore $\|T\| \le 1$. 

Now we define the function $g(x) = f(x)+\|x\|^6$. We first show that this function is well-behaved.

\begin{claim}
$g(x)$ is well-behaved.
\end{claim}

\begin{proof}
 Since $g(x)$ is a polynomial with bounded coefficients, clearly it is infinite order differentiable and satisfies condition 2. For condition 1, notice that $g(x) = 0$ and for all $\|x\|_2 > 1$, we have $g(x) \ge \|x\|^6-\|x|^4 > 0$ so the global minimizer must be at a point within the unit $\ell_2$ ball. Finally, for any $\|x\| = 1$, we know $g(tx) = f(x)t^4 + t^6$ which is always increasing when $t\ge 1$ since $|f(x)| \le 1$.
 \end{proof}

Next we show if $f(x)$ is nonnegative, then $\vec{0}$ is the unique fourth order local minimizer.

\begin{claim}
If $f(x)$ is nonnegative, then $\vec{0}$ is the unique fourth order local minimizer of $g(x)$.
\end{claim}

\begin{proof}
Suppose $x\ne 0$ is a local minimizer of $g(x)$ of order at least 1. Let $u = x/\|x\|$. We consider the function $g(tu) = f(u)t^4 + t^6$. Clearly the only first order local minimizer of $g(tu)$ is at $t = 0$. Therefore $x$ cannot be a first order local minimizer of $g(x)$. 
\end{proof}

Finally, we show if $f(x)$ has a negative direction, then all the local minimizer of $g(x)$ must have negative value in $f$.

\begin{claim}
If $f(x)$ is negative for some $x$, then if $x$ is a fourth order local minimum of $g(x)$ then $f(x) < 0$.
\end{claim}

\begin{proof}
Suppose $x\ne 0$ is a fourth order local minimum of $g(x)$. Then at least $t=1$ should be a fourth order local minimum of $g(tx) = f(x)t^4 +t^6\|x\|^6$. This is only possible if $f(x) < 0$. 

On the other hand, for $x = 0$, suppose $\|z\|=1$ is a direction where $f(z) < 0$, then $f(x) - f(x+tz) = f(z)t^4 - t^6 = \Omega(t^4)$, so $x = 0$ is not a fourth order local minimum.
\end{proof}

The theorem follows immediately from the three claims.
\end{proof}

\end{document}